\documentclass[review,11pt]{ReportTemplate}
\usepackage{bm}
\usepackage{graphicx}
\usepackage{paralist,algorithmic,algorithm}
\usepackage{amsmath,mathrsfs,amssymb}
\newtheorem{lemma}{Proposition}

\newenvironment{proof}{\textbf{Proof:}\ }{\hspace{\stretch{1}}$\square$\\}

\begin{document}
\begin{frontmatter}
\title{One-Pass Learning with Incremental and Decremental Features}

\author{Chenping Hou$^\dag~^\ddag$}
\author{Zhi-Hua Zhou$^\ddag$\corref{cor1}}
\address{ $^\dag$College of Science, National University of Defense Technology, Changsha, 410073 \\
$^\ddag$National Key Laboratory for Novel Software Technology, Nanjing University, Nanjing 210093, China} \cortext[cor1]{\small Corresponding author.
Email: zhouzh@nju.edu.cn}

\begin{abstract}
  In many real tasks the features are evolving, with some features being vanished and some other features augmented. For example, in environment monitoring some sensors expired whereas some new ones deployed; in mobile game recommendation some games dropped whereas some new ones added. Learning with such incremental and decremental features is crucial but rarely studied, particularly when the data coming like a stream and thus it is infeasible to keep the whole data for optimization. In this paper, we study this challenging problem and present the OPID approach. Our approach attempts to compress important information of vanished features into functions of survived features, and then expand to include the augmented features. It is the one-pass learning approach, which only needs to scan each instance once and does not need to store the whole data, and thus satisfy the evolving streaming data nature. The effectiveness of our approach is validated theoretically and empirically.
\end{abstract}

\begin{keyword}
One-pass learning \sep Incremental and Decremental Features \sep classification \sep robust learning
\end{keyword}

\end{frontmatter}

In many real applications, the features are evolving, with some features being vanished and some other features augmented. For example, in the research of environment monitoring, in order to detect the environment in full aspects, different kinds of sensors, such as trace metal, radioisotope, volatile organic compound and biological sensors \cite{s5010004}, are deployed in a dynamic way. Due to the differences in working ways and working conditions, such as optical, electrochemical or gravimetric \cite{Stetter}, some sensors expired whereas some new sensors deployed. If we regard the output of each sensor as a feature, the data features are both incremental and decremental. This setting also occurs in the mobile game recommendation system in Android market \cite{Skocir2012}. There are a lot of people making ratings for many games. Some games dropped whereas some new ones added with times elapsing. It is also a feature evolution system.

Compared with traditional problems, there are at least two challenges in analyzing this kinds of data. (1) In these applications, the instances are coming like a stream. It is different from traditional learning paradigm, since the features and instances are evolving simultaneously. (2) Due to the streaming nature, it is infeasible to keep the whole data. It requires us to access the data in one-pass way.

Learning with such incremental and decremental features and evolving instances is crucial but rarely studied. Several related works have been proposed to solve part of this problem. To manipulate instance evolving problem, online learning, which can be traced back to Perceptron algorithm \cite{Rosenblatt58theperceptron}, is a standard learning paradigm and there are plenty of researches recently \cite{nips/CauwenberghsP00, Cesa-Bianchi:2006:PLG:1137817, Hazan:2007:LRA:1296038.1296051, Zhang:2015:OBL}. One-pass learning, as a special case of online learning, has also attracted many research interests in recent years \cite{GaoJZZ13, OPMVL}. To solve feature incremental and decremental problem, there are some researches concerning missing and corrupted features, such as \cite{Globerson, icml/DekelS08, nips/TeoGRS07,icml/HazanLM15}.

Although these researches have achieved prominent performances in their learning settings, they cannot fulfill the requirements arisen from the above-mentioned real applications, since they only focus on \emph{one aspect} of instance and feature evolution problem, either instance evolution or feature evolution. Direct combination of two types of previous methods cannot deal with the problem well.

In this paper, we try to manipulate this problem by proposing One-Pass Incremental and Decremental learning approach (OPID). For clarity, we tackle it in two stages, i.e., Compressing stage (C-stage) and Expanding stage (E-stage). In C-stage, we propose a new one-pass learning method by compressing important information of vanished features into functions of survived features. It only accesses the instance once and extract useful information to assist the following learning task. In E-Stage, accompanied by the learning model trained in C-stage, we present a new learning method which can include augmented features and inherit the advantages from C-stage. As far as we know, this is the first research about the problem with simultaneous instance and feature evolution. Several theoretical and experimental results are provided for illustration.

\section{Preliminaries}

In our work, we mainly focus on the learning problem in two stages, i.e., C-stage and E-stage with feature and instance evolution simultaneously. In C-stage, data are collected in mini-batch style. Assume that there are totally $T_1$ batches. In each batch, the features of each instance can be divided into two parts. The first part contains the features which will vanish in E-stage and the other part consists of the features which survive for both stages. They are named as \emph{vanished feature} and \emph{survived feature}. In E-stage, we assume that there are two batches of data. One batch is employed for training and the other one is used for testing. The features of each instance in this stage can also be divided into two parts. The first part contains the features that are survived in both stages. The second part is augmented features, which is referred as \emph{augmented feature} in the following.

\begin{figure}[!t]
\centering
\label{fig:1}
\includegraphics[width=5.in]{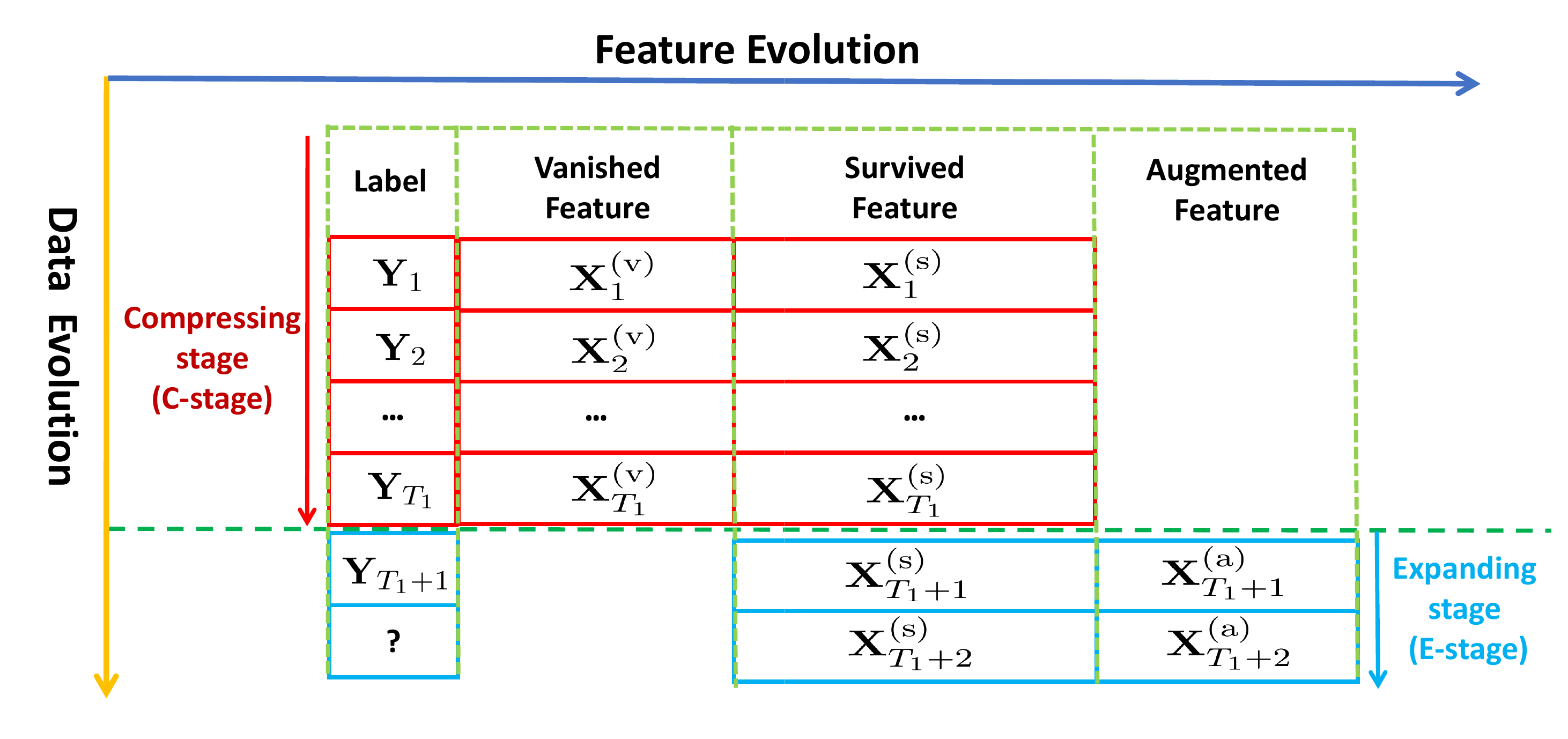}
\caption{Notations.}
\vskip -0.2in
\end{figure}

Formally, as pictured in Fig. 1, in the $i$-batch of C-stage, data points can be represented by two matrices, i.e., $\mathbf{X}_i^{\textrm{(v)}} \in \mathbb{R}^{n_i\times d^{\textrm{(v)}}}$ and $\mathbf{X}_i^{\textrm{(s)}} \in \mathbb{R}^{n_i\times d^{\textrm{(s)}}}$, where $n_i$ is the number of points in this batch, $d^{\textrm{(v)}}$ and $d^{\textrm{(s)}}$ are the numbers of vanished features and survived features respectively. Here, the superscripts "$\textrm{(v)}$" and "$\textrm{(s)}$" correspond to vanished features and survived features. The $j$-th row of $\mathbf{X}_i^{\textrm{(v)}}$ is an instance with only vanished features. Correspondingly, the $j$-th row of $\mathbf{X}_i^{\textrm{(s)}}$ is an instance with only survived features. The label matrix of instances in the $i$-batch is denoted by $\mathbf{Y}_i \in \mathbb{R}^{n_i \times c}$ with $c$ as the number of class. Its $(k,l)$-element $Y_i(k,l)=1$ if and only if the $k$-th instance in the $i$-batch belongs to the $l$-th category and $Y_i(k,l)=0$ otherwise.

In E-stage, it is often that we want to make prediction when we only get one batch training data. Thus, there are only two batches in this stage. The first batch contains training data, represented by $\mathbf{X}_{T_1+1}^{\textrm{(s)}} \in \mathbb{R}^{n_{T_1+1} \times d^{\textrm{(s)}}}$ and $\mathbf{X}_{T_1+1}^{\textrm{(a)}} \in \mathbb{R}^{n_{T_1+1} \times d^{\textrm{(a)}}}$, where $n_{T_1+1}$ is the number of training points in this stage, $d^{\textrm{(a)}}$ is the numbers of augmented features and the superscript "$\textrm{(a)}$" corresponds to augmented features. Similarly, the $j$-th row of $\mathbf{X}_{T_1+1}^{\textrm{(s)}}$ consists of survived features and the $j$-th row of $\mathbf{X}_{T_1+1}^{\textrm{(a)}}$ contains augmented features. The label matrix is denoted by $\mathbf{Y}_{T_1+1}$. Similarly, the second batch contains testing points with the same structure as training.

According to above notations, \textbf{our main task} is to classify data points represented by $\bar{\mathbf{X}}_{T_1+2} \triangleq [\mathbf{X}_{T_1+2}^{\textrm{(s)}}, \mathbf{X}_{T_1+2}^{\textrm{(a)}}]$, based on $\{ \bar{\mathbf{X}}_{T_1+1} \triangleq [\mathbf{X}_{T_1+1}^{\textrm{(s)}}, \mathbf{X}_{T_1+1}^{\textrm{(a)}}],~\mathbf{Y}_{T_1+1}\}$. Besides, we can also use models learned on $\{\tilde{\mathbf{X}}_i \triangleq [\mathbf{X}_i^{\textrm{(v)}}, \mathbf{X}_i^{\textrm{(s)}}],~\mathbf{Y}_{i}\}_{i=1}^{T_1}$, without saving all the data. In the $T$-th step, we can only access the data $\{\tilde{\mathbf{X}}_T, ~\mathbf{Y}_T\}$, without saving the past data $\{\tilde{\mathbf{X}}_i, ~\mathbf{Y}_{i}\}_{i=1}^{T-1}$. Here, \emph{a tilde above the symbol represents variable in C-stage and a bar represents variable in E-stage}.

It is noteworthy to mention that we can also use $\{\bar{\mathbf{X}}_{T_1+1}, \mathbf{Y}_{T_1+1}\}$ merely to train a classifier in E-stage. Nevertheless, in many real applications, since $n_{T_1+1}$ is often comparable to that of $n_{i}$ for $i=1,2,\cdots,n_{T_1}$ and they are often small due to the limitation of storage, training with only the instance in E-stage trends to be over-fitting. It is better to learn a classifier with the assistance of model trained in C-stage.

\section{The OPID Approach}

We investigate a real application problem with complicated settings, and it is difficult to use traditional approaches to solve this problem directly. There are two stages and both the instances and features are changed. In our paper, we tackle this problem in following way.

~~1. In C-stage, we learn a classifier based on $\{\tilde{\mathbf{X}}_i, \mathbf{Y}_{i}\}_{i=1}^{T_1}$ in one-pass way. That is, we only access training examples once. Besides, the learned classifier should also provide useful classification information to help the training in E-stage.

~~2. In E-stage, we learn a classifier based on $\{\bar{\mathbf{X}}_{T_1+1}, \mathbf{Y}_{T_1+1}\}$, under the supervision of classifier learned in C-stage.

\textbf{C-stage}: In this stage, there are two different kinds of features, i.e., vanished feature and survived feature. If we combine them to learn a unified classifier, it will be difficult to use it in E-stage, since the features are different in two stages. Notice that, there are features surviving in both stages. It is better to compress important information of vanished features into functions of survival features. In other words, we want to use the model trained in survived features to represent important information of both vanished features and survived features.

Let $\tilde{\mathcal{H}}$ and $\mathcal{H}^{\textrm{(s)}}$ be the function space for all features (vanished features and survived features) and the survived features respectively. In C-stage, we try to learn two classifiers $ \tilde{h} \in \tilde{\mathcal{H}}$ and $ \tilde{h}^{\textrm{(s)}} \in \mathcal{H}^{\textrm{(s)}}$ with some consistency constraints between them. Denote the loss function on all features and survived features as $\tilde{\ell}$ and $\tilde{\ell}^{\textrm{(s)}}$, the expected classifiers in C-stage can be learned by optimizing
{
\begin{equation}
\label{eq1}
\begin{split}
\min_{\tilde{h} \in \mathcal{H}, \tilde{h}^{\textrm{(s)}} \in \mathcal{H}^{\textrm{(s)}} }~~& \sum_{i=1}^{T_1}
 \tilde{\ell} \left( \tilde{h} (\tilde{\mathbf{X}}_i), \mathbf{Y}_i \right)+\tilde{\ell}^{\textrm{(s)}}\left( \tilde{h}^{\textrm{(s)}}(\mathbf{X}_i^{\textrm{(s)}}), \mathbf{Y}_i \right) \\
\textrm{s.t.}~~ & \mathcal{D}\left( \tilde{h}(\tilde{\mathbf{X}}_i) , \tilde{h}^{\textrm{(s)}}(\mathbf{X}_i^{\textrm{(s)}}) \right) \leq \epsilon. \textrm{~~for~~} i\in[T_1]~.\\
\end{split}
\end{equation}}
Here $\tilde{\mathbf{X}}_i = [\mathbf{X}_i^{\textrm{(v)}}, \mathbf{X}_i^{\textrm{(s)}}]$. $\mathcal{D}$ is employed to measure the consistency between two classifiers on every batch. We denote $[T_1]=\{1,2,\cdots,T_1\}$ for the convenience of presentation.

For example, we assume $\tilde{h}$ and $\tilde{h}^{\textrm{(s)}}$ are two linear classifiers with square loss. Besides, the Frobenius norm (denoted by $\|\cdot\|$) is employed as the measurement of consistency. The optimization problem in Eq. (\ref{eq1}) becomes
{
\begin{equation}
\label{eq2}
\begin{split}
\min_{\tilde{\mathbf{W}}, \mathbf{W}^{\textrm{(s)}}}~~& \sum_{i=1}^{T_1} \| \langle \tilde{\mathbf{W}}, \tilde{\mathbf{X}}_{i} \rangle - \mathbf{Y}_{i} \|^2 + \sum_{i=1}^{T_1} \| \langle \mathbf{W}^{\textrm{(s)}}, \mathbf{X}_{i}^{\textrm{(s)}} \rangle- \mathbf{Y}_{i} \|^2 \\
&+\lambda \sum_{i=1}^{T_1} \| \langle \tilde{\mathbf{W}}, \tilde{\mathbf{X}}_{i} \rangle - \langle \mathbf{W}^{\textrm{(s)}}, \mathbf{X}_{i}^{\textrm{(s)}} \rangle \|^2+
\rho(\| \tilde{\mathbf{W}} \|^2 + \| \mathbf{W}^{\textrm{(s)}}\|^2),
\end{split}
\end{equation}}
where $\tilde{\mathbf{W}}$ and $\mathbf{W}^{\textrm{(s)}}$ are classifier coefficients defined on all features and survived features in C-stage respectively, $\lambda>0$ is the parameter to tune the importance of consistency constraint and $\rho>0$ is the parameter for regularization. It is an extension of traditional regularized least square classifier by adding the consistency constraint. We will solve it by two types of one-pass learning methods.

\textbf{E-stage}: We expand the classifier trained on survived features, i.e., $\tilde{h}^{\textrm{(s)}}$, in C-stage to accommodate augmented features. It can take $\mathbf{X}_{T_1+1}^{\textrm{(s)}}$ as the input directly. Denote $\tilde{h}_{*}^{\textrm{(s)}}$ as the optimal classifier and $\mathbf{W}_{*}^{\textrm{(s)}}$ as its coefficient in C-stage, then $\mathbf{Z}_{T_1+1}^{\textrm{(s)}} \triangleq \tilde{h}_{*}^{\textrm{(s)}}(\mathbf{X}_{T_1+1}^{\textrm{(s)}}) = \mathbf{X}_{T_1+1}^{\textrm{(s)}} \mathbf{W}_{*}^{\textrm{(s)}}$ is the prediction by employing the classifier training in C-stage. We take this prediction as new representations of $\mathbf{X}_{T_1+1}^{\textrm{(s)}}$ as in stacking \cite{stacking, Zhou:2012:EMF}. After that, we train a classifier $\bar{h}^{\textrm{(s)}}$ on $\mathbf{Z}_{T_1+1}^{\textrm{(s)}}$ and simultaneously, another classifier $\bar{h}$ is trained on $ \bar{\mathbf{Z}}_{T_1+1} \triangleq [\mathbf{Z}_{T_1+1}^{\textrm{(s)}}, \mathbf{X}_{T_1+1}^{\textrm{(a)}}]$ for accommodation. It can be regarded as expansion of the optimal classifier in C-stage to include augmented features.

To inherit advantages of $\tilde{h}_{*}^{\textrm{(s)}}$ trained in C-stage, we combine two classifiers like ensemble methods \cite{Zhou:2012:EMF}. At first, we employ this strategy to unify two classifiers by optimizing the following problem.
{
\begin{equation}
\label{eq11}
\begin{split}
\min_{\bar{h}^{\textrm{(s)}}, \bar{h}} w_1~\bar{\ell}^{\textrm{(s)}} \left( \bar{h}^{\textrm{(s)}}(\mathbf{Z}_{T_1+1}^{\textrm{(s)}}),\mathbf{Y}_{T_1+1} \right) + w_2~\bar{\ell} \left( \bar{h} (\bar{\mathbf{Z}}_{T_1+1}),\mathbf{Y}_{T_1+1} \right)~,
\end{split}
\end{equation}}
where $w_1 \geq 0, w_2\geq 0, w_1+w_2=1$, are the weights to balance two classifiers. $\bar{\ell}^{\textrm{(s)}}$ and $\bar{h}$ are two surrogate loss function defined on $\mathbf{Z}_{T_1+1}^{\textrm{(s)}}$ and $\bar{\mathbf{Z}}_{T_1+1}$ respectively.

For simplicity, we use the L$_2$-regularized Logistic Regression model for each classifier. Take the binary classification problem as an example, the objective functions are
{
\begin{equation}
\label{eq19}
\begin{split}
&\bar{\ell}^{\textrm{(s)}} \left( \bar{h}^{\textrm{(s)}}(\mathbf{Z}_{T_1+1}^{\textrm{(s)}}),\mathbf{y}_{T_1+1} \right) = \frac{1}{2} \mathbf{v}^{\textrm{(s)}}(\mathbf{v}^{\textrm{(s)}})^\top+ \alpha_1 \sum_{j=1}^{n_{T_1+1}} \textrm{log}(1+\exp (- y_{T_1+1,j}\mathbf{z}_{T_1+1,j}^{\textrm{(s)}}\mathbf{v}^{\textrm{(s)}} )) \\
&\bar{\ell} \left( \bar{h} (\bar{\mathbf{Z}}_{T_1+1}),\mathbf{y}_{T_1+1} \right) = \frac{1}{2} \bar{\mathbf{v}} \bar{\mathbf{v}}^\top+ \alpha_2 \sum_{j=1}^{n_{T_1+1}} \textrm{log}(1+\exp (- y_{T_1+1,j} \bar{\mathbf{z}}_{T_1+1,j} \bar{\mathbf{v}} ))
\end{split}
\end{equation}}
where $\mathbf{v}^{\textrm{(s)}}$ and $\bar{\mathbf{v}}$ are coefficients. $\mathbf{z}_{T_1+1,j}^{\textrm{(s)}}$ and $\bar{\mathbf{z}}_{T_1+1,j}$ are the $j$-th row (the $j$-th instances) of $\mathbf{Z}_{T_1+1}^{\textrm{(s)}}$ and $\bar{\mathbf{Z}}_{T_1+1}$. $\alpha_1$ and $\alpha_2$ are balance parameters. $y_{T_1+1,j}$ is 1 or -1 for binary classification.

One point should be mentioned here. We take $\mathbf{Z}_{T_1+1}^{\textrm{(s)}}$ as the new representation, although it can be regarded as the classification results directly. The reasons are: (1) The prediction $\mathbf{Z}_{T_1+1}^{\textrm{(s)}}$ is computed by the classifier trained on C-stage, we can use training data in E-stage to improve $\tilde{h}_{*}^{\textrm{(s)}}$. This composite works since $\tilde{h}_{*}^{\textrm{(s)}}$ and $\bar{h}^{\textrm{(s)}}$ are trained on different data sets. (2) As $\sum_{i=1}^{T_1} n_{i}$ is often much larger than $n_{T_1+1}$, by contrast with the classifier trained on $\mathbf{X}_{T_1+1}^{\textrm{(s)}}$ merely, $\tilde{h}_{*}^{\textrm{(s)}}$ is a better classifier and it could extract more discriminative information. In other words, compared with $\mathbf{X}_{T_1+1}^{\textrm{(s)}}$, $\mathbf{Z}_{T_1+1}^{\textrm{(s)}}$ is a more compact and accurate representation. (3) If $\tilde{h}_{*}^{\textrm{(s)}}$ is good enough, the composite of $\bar{h}^{\textrm{(s)}}$ will not degrade the performance by our following strategy.

\section{Optimization and Extension}

\subsection{Optimization}

\textbf{C-stage:} The optimization problem in Eq.(\ref{eq1}) can be divided into $T_1$ subproblems, thus, it is direct to use the online learning method, such as online ADMM \cite{icml/WangB12}, to solve it by scanning the data only once. Nevertheless, we aim to train $\tilde{h}^{\textrm{(s)}}$ to assist the classification in E-stage, and the most direct way is to employ a linear classifier. In this case, we will provide more effective one-pass learning methods than using ADMM.

\begin{lemma}
\label{lemma1}
The optimal solution to Eq. (\ref{eq2}) can be obtained by solving
{
\begin{equation}
\label{eq4}
\begin{split}
\mathbf{A}_{[T_1]} \left[ {\begin{array}{*{20}{c}}
\tilde{\mathbf{W}} \\
\mathbf{W}^{\textrm{(s)}}
\end{array}} \right]=\mathbf{B}_{[T_1]},
\textrm{~~with~~}
\mathbf{B}_{[T]} \triangleq \sum_{i=1}^T
\left[ {\begin{array}{*{20}{c}}
\tilde{\mathbf{X}}_i^\top \\
(\mathbf{X}_i^{\textrm{(s)}})^\top \\
\end{array}} \right]\mathbf{Y}_i~,
\end{split}
\end{equation}}
{
\begin{equation}
\label{eq5}
\begin{split}
\mathbf{A}_{[T]} \triangleq
\left[ {\begin{array}{*{20}{c}}
(1+\lambda) \sum_{i=1}^{T} \tilde{\mathbf{X}}_{i}^\top \tilde{\mathbf{X}}_{i} +\rho \mathbf{I} &  -\lambda \sum_{i=1}^{T} \tilde{\mathbf{X}}_{i}^\top \mathbf{X}_{i}^{\textrm{(s)}} \\
-\lambda \sum_{i=1}^{T} (\mathbf{X}_{i}^{\textrm{(s)}})^\top \tilde{\mathbf{X}}_{i} & (1+\lambda) \sum_{i=1}^{T} (\mathbf{X}_{i}^{\textrm{(s)}})^\top \mathbf{X}_{i}^{\textrm{(s)}} +\rho \mathbf{I}
\end{array}}
\right],
\end{split}
\end{equation}}
and $\mathbf{I}$ is an identity matrix.
\end{lemma}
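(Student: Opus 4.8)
The plan is to recognize Eq.~(\ref{eq2}) as an unconstrained, strictly convex quadratic program in the pair $(\tilde{\mathbf{W}},\mathbf{W}^{\textrm{(s)}})$ and to derive the linear system of Eqs.~(\ref{eq4})--(\ref{eq5}) as its first-order stationarity conditions. First I would note that, writing $\langle\mathbf{W},\mathbf{X}\rangle=\mathbf{X}\mathbf{W}$, each of the four terms in the objective is a squared Frobenius norm of an affine function of $(\tilde{\mathbf{W}},\mathbf{W}^{\textrm{(s)}})$, so the objective $f$ is convex; the ridge term $\rho(\|\tilde{\mathbf{W}}\|^2+\|\mathbf{W}^{\textrm{(s)}}\|^2)$ with $\rho>0$ makes it strictly convex and coercive, hence $f$ has a unique global minimizer, which is exactly its unique stationary point. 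It therefore suffices to show that setting the gradient of $f$ to zero is equivalent to Eq.~(\ref{eq4}).

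Next I would compute the two block gradients using $\nabla_{\mathbf{W}}\|\mathbf{X}\mathbf{W}-\mathbf{Y}\|^2=2\mathbf{X}^\top(\mathbf{X}\mathbf{W}-\mathbf{Y})$. Differentiating $f$ with respect to $\tilde{\mathbf{W}}$ and dividing by $2$ yields
\[
\sum_{i=1}^{T_1}\tilde{\mathbf{X}}_i^\top(\tilde{\mathbf{X}}_i\tilde{\mathbf{W}}-\mathbf{Y}_i)+\lambda\sum_{i=1}^{T_1}\tilde{\mathbf{X}}_i^\top(\tilde{\mathbf{X}}_i\tilde{\mathbf{W}}-\mathbf{X}_i^{\textrm{(s)}}\mathbf{W}^{\textrm{(s)}})+\rho\tilde{\mathbf{W}}=0 ,
\]
and collecting the $\tilde{\mathbf{W}}$ and $\mathbf{W}^{\textrm{(s)}}$ contributions reproduces the first block row of $\mathbf{A}_{[T_1]}$ together with the first block of $\mathbf{B}_{[T_1]}$. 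Differentiating with respect to $\mathbf{W}^{\textrm{(s)}}$ gives, analogously,
\[
\sum_{i=1}^{T_1}(\mathbf{X}_i^{\textrm{(s)}})^\top(\mathbf{X}_i^{\textrm{(s)}}\mathbf{W}^{\textrm{(s)}}-\mathbf{Y}_i)-\lambda\sum_{i=1}^{T_1}(\mathbf{X}_i^{\textrm{(s)}})^\top(\tilde{\mathbf{X}}_i\tilde{\mathbf{W}}-\mathbf{X}_i^{\textrm{(s)}}\mathbf{W}^{\textrm{(s)}})+\rho\mathbf{W}^{\textrm{(s)}}=0 ,
\]
whose rearrangement is the second block row. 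Stacking the two equations into one block system is precisely $\mathbf{A}_{[T_1]}[\tilde{\mathbf{W}};\mathbf{W}^{\textrm{(s)}}]=\mathbf{B}_{[T_1]}$, as claimed. Finally I would remark that $\mathbf{A}_{[T_1]}$ is twice the Hessian of $f$ and is symmetric positive definite (its quadratic form equals the degree-two-homogeneous part of $f$, which is strictly positive by $\rho>0$), so the system has a unique solution that coincides with the minimizer of Eq.~(\ref{eq2}).

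I expect the only genuinely error-prone step to be the bookkeeping of the coupling term: the factor $\lambda\|\tilde{\mathbf{X}}_i\tilde{\mathbf{W}}-\mathbf{X}_i^{\textrm{(s)}}\mathbf{W}^{\textrm{(s)}}\|^2$ feeds into \emph{both} diagonal blocks (producing the $(1+\lambda)$ coefficients) and, with a sign flip, into \emph{both} off-diagonal blocks (producing the $-\lambda$ cross terms), so one must track these contributions carefully to obtain exactly the coefficients displayed in Eq.~(\ref{eq5}); the remaining manipulations are routine matrix calculus, and the convexity/positive-definiteness claims follow immediately from $\rho>0$.
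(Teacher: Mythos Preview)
Your proposal is correct and follows essentially the same approach as the paper: compute the two block gradients of the objective in Eq.~(\ref{eq2}), set them to zero, and recognize the resulting pair of equations as the block system $\mathbf{A}_{[T_1]}[\tilde{\mathbf{W}};\mathbf{W}^{\textrm{(s)}}]=\mathbf{B}_{[T_1]}$. Your additional remarks on strict convexity, coercivity, and positive definiteness of $\mathbf{A}_{[T_1]}$ add rigor that the paper's proof omits, but the derivation itself is identical.
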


\begin{proof}
Take the derivative of the objective function in Eq. (\ref{eq2}) with respect to $\tilde{\mathbf{W}}$ and $\mathbf{W}^{\textrm{(s)}}$ and set them to zeros, we have the following equations.
{
\begin{equation}
\label{eq3-s}
\begin{split}
&\sum_{i=1}^{T_1} \tilde{\mathbf{X}}_{i}^\top (\tilde{\mathbf{X}}_{i} \tilde{\mathbf{W}} - \mathbf{Y}_{i}) + \lambda \sum_{i=1}^{T_1} \tilde{\mathbf{X}}_{i}^\top (\tilde{\mathbf{X}}_{i} \tilde{\mathbf{W}} - \mathbf{X}_{i}^{\textrm{(s)}}\mathbf{W}^{\textrm{(s)}}) +\rho \tilde{\mathbf{W}} = \mathbf{0},\\
&\sum_{i=1}^{T_1} (\mathbf{X}_{i}^{\textrm{(s)}})^\top (\mathbf{X}_{i}^{\textrm{(s)}}\mathbf{W}^{\textrm{(s)}}- \mathbf{Y}_{i})+\lambda \sum_{i=1}^{T_1} (\mathbf{X}_{i}^{\textrm{(s)}})^\top ( \mathbf{X}_{i}^{\textrm{(s)}} \mathbf{W}^{\textrm{(s)}}- \tilde{\mathbf{X}}_{i} \tilde{\mathbf{W}}) + \rho \mathbf{W}^{\textrm{(s)}}=\mathbf{0}.\\
\end{split}
\end{equation}
}
Denote $\mathbf{A}_{[T]}$ and $\mathbf{B}_{[T]}$ as shown in Eq. (\ref{eq5}) and Eq. (\ref{eq4}). The optimization problem problem in Eq. (\ref{eq3-s}) becomes
{
\begin{equation}
\label{eq6-s}
\begin{split}
\mathbf{A}_{[T_1]} \left[ {\begin{array}{*{20}{c}}
\tilde{\mathbf{W}} \\
\mathbf{W}^{\textrm{(s)}}
\end{array}} \right]=\mathbf{B}_{[T_1]}.
\end{split}
\end{equation}
}
It is just the results shown in Proposition 1.
\end{proof}

 Based on the above deduction, we turn to solve the problem in Eq. (\ref{eq2}) in one-pass way quickly. In the $T$-th time, we only access the instance $\{\tilde{\mathbf{X}}_i, ~\mathbf{Y}_{i}\}_{i=1}^{T}$. The counterpart optimization problem is the same as Eq. (\ref{eq2}), except that the sum of subscript $i$ is from 1 to $T$.

Notice that the solution to problem in Eq. (\ref{eq2}) is determined by $\mathbf{A}_{[T]}$ and $\mathbf{B}_{[T]}$ defined in Eq. (\ref{eq5}) and Eq. (\ref{eq4}). This evokes us to get the following updating rule.
{
\begin{equation}
\label{eq7}
\begin{split}
\mathbf{A}_{[T+1]} =& \mathbf{A}_{[T]}+
\left[\begin{array}{*{20}{c}}
(1+\lambda) \tilde{\mathbf{X}}_{T+1}^\top \tilde{\mathbf{X}}_{T+1} &  -\lambda \tilde{\mathbf{X}}_{T+1}^\top \mathbf{X}_{T+1}^{\textrm{(s)}} \\
-\lambda (\mathbf{X}_{T+1}^{\textrm{(s)}})^\top \tilde{\mathbf{X}}_{T+1}  & (1+\lambda)(\mathbf{X}_{T+1}^{\textrm{(s)}})^\top \mathbf{X}_{T+1}^{\textrm{(s)}}
\end{array}
\right],\\
\mathbf{B}_{[T+1]} =&  \mathbf{B}_{[T]}+
\left[ {\begin{array}{*{20}{c}}
\tilde{\mathbf{X}}_{T+1}^\top \\
(\mathbf{X}_{T+1}^{\textrm{(s)}})^\top \\
\end{array}} \right]\mathbf{Y}_{T+1},
\end{split}
\end{equation}}
{
\begin{equation}
\label{eq8}
\begin{split}
\mathbf{A}_{1} =
\left[\begin{array}{*{20}{c}}
(1+\lambda) \tilde{\mathbf{X}}_{1}^\top \tilde{\mathbf{X}}_{1}+\rho \mathbf{I} &  -\lambda \tilde{\mathbf{X}}_{1}^\top \mathbf{X}_{1}^{\textrm{(s)}} \\
-\lambda (\mathbf{X}_{1}^{\textrm{(s)}})^\top \tilde{\mathbf{X}}_{1}   & (1+\lambda)(\mathbf{X}_{1}^{\textrm{(s)}})^\top \mathbf{X}_{1}^{\textrm{(s)}}+\rho \mathbf{I}
\end{array}
\right],
\mathbf{B}_{1} =
\left[ {\begin{array}{*{20}{c}}
\tilde{\mathbf{X}}_{1}^\top \\
(\mathbf{X}_{1}^{\textrm{(s)}})^\top \\
\end{array}} \right]\mathbf{Y}_{1}.
\end{split}
\end{equation}}

According to this result, in time $T+1$, we only need to update $\mathbf{A}_{[T+1]}$ and $\mathbf{B}_{[T+1]}$ by adding the matrices calculated based on the data in batch $T+1$. In other words, we just need to store the matrices $\mathbf{A}_{[T]}$ and $\mathbf{B}_{[T]}$ and update them based on Eq.(\ref{eq7}) in each iteration.

This approach has the following advantages: (1) It just needs to store two matrices with size $(d^{\textrm{(v)}}+2d^{\textrm{(s)}})\times (d^{\textrm{(v)}}+2d^{\textrm{(s)}})$ and $(d^{\textrm{(v)}}+2d^{\textrm{(s)}})\times c$. When $d^{\textrm{(v)}}+2d^{\textrm{(s)}} < \sum_{i=1}^{T_1} n_{i}$, it needs less space than storing the whole data. Through this way, we can get the optimal solution to Eq. (\ref{eq2}) by scanning the total data only once. (2) We only make matrix multiplication in updating $\mathbf{A}_{[T]}$ and $\mathbf{B}_{[T]}$, the computational cost is small. In solving Eq. (\ref{eq4}), the most time-consuming step is computing the inverse of a matrix with size $(d^{\textrm{(v)}}+2d^{\textrm{(s)}})\times (d^{\textrm{(v)}}+2d^{\textrm{(s)}})$. Thus, this method is very efficient with large data number and small data dimensionality.

Compared with the data size, when the number of features, i.e., $d^{\textrm{(v)}}+d^{\textrm{(s)}}$, is rather large, it is unwise to compute the inverse of $\mathbf{A}_{[T_1]}$ directly. In this case, we propose another one-pass learning approach in solving the optimization problem in Eq. (\ref{eq4}).

Define $\mathbf{A}_{[0]}=\rho \mathbf{I}$ and $\mathbf{B}_{[0]}= \mathbf{0}$, the updating rule shown in Eq. (\ref{eq7}) can be initialized from $T=0$. Note that $\mathbf{A}_{[0]}^{-1} = (1/\rho) \mathbf{I}$. If we can replace the updating rule of $\mathbf{A}_{[T+1]}$ in Eq. (\ref{eq7}) by the updating rule of $\mathbf{A}_{[T+1]}^{-1}$ with less computational cost in each iteration, the computational burden in calculating $\mathbf{A}_{[T_1]}^{-1}$ will release. Notice that, the added matrix in updating $\mathbf{A}_{[T+1]}$ is not a full rank matrix if $n_{T+1}$ is smaller than $\min\{d^{\textrm{(v)}},d^{\textrm{(s)}}\}$. We will use this property to compute the inverse with low cost.

\begin{lemma}
\label{lemma2}
The updating rule of $\mathbf{A}_{[T+1]}^{-1}$ is
{
\begin{equation}
\label{eq10}
\begin{split}
\mathbf{A}_{[T+1]}^{-1} =
\mathbf{A}_{[T]}^{-1}-\mathbf{A}_{[T]}^{-1}\mathbf{U}_{T+1}\left(\mathbf{I}+\mathbf{U}_{T+1}^\top\mathbf{A}_{[T]}^{-1}\mathbf{U}_{T+1}\right)^{-1} \mathbf{U}_{T+1}^\top \mathbf{A}_{[T]}^{-1}
\end{split}
\end{equation}}
where $\mathbf{U}_{T+1} = [\mathbf{U}_{T+1,1}, \mathbf{U}_{T+1,2}, \mathbf{U}_{T+1,3}]$ and
{
\begin{equation*}
\label{eq9}
\begin{split}
\mathbf{U}_{T+1,1}=\left[ {\begin{array}{*{20}{c}}
\tilde{\mathbf{X}}_{T+1}^\top \\
\mathbf{0}
\end{array}} \right],
\mathbf{U}_{T+1,2}=\left[ {\begin{array}{*{20}{c}}
\mathbf{0}\\
(\mathbf{X}_{T+1}^{\textrm{(s)}})^\top
\end{array}} \right],
\mathbf{U}_{T+1,3}=\left[ {\begin{array}{*{20}{c}}
\sqrt{\lambda} \tilde{\mathbf{X}}_{T+1}^\top\\
-\sqrt{\lambda}(\mathbf{X}_{T+1}^{\textrm{(s)}})^\top
\end{array}} \right].
\end{split}
\end{equation*}}
\end{lemma}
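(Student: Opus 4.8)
The plan is to observe that the recursion in Eq.~(\ref{eq7}) adds to $\mathbf{A}_{[T]}$ a symmetric matrix whose rank is controlled by $n_{T+1}$, and then to apply the Woodbury matrix identity to convert this into an update of the inverse. Concretely, I would first establish the factorization
\begin{equation*}
\mathbf{A}_{[T+1]} = \mathbf{A}_{[T]} + \mathbf{U}_{T+1}\mathbf{U}_{T+1}^\top ,
\end{equation*}
with $\mathbf{U}_{T+1} = [\mathbf{U}_{T+1,1}, \mathbf{U}_{T+1,2}, \mathbf{U}_{T+1,3}]$ exactly as in the statement, and then invoke $(\mathbf{M}+\mathbf{U}\mathbf{U}^\top)^{-1} = \mathbf{M}^{-1} - \mathbf{M}^{-1}\mathbf{U}(\mathbf{I}+\mathbf{U}^\top\mathbf{M}^{-1}\mathbf{U})^{-1}\mathbf{U}^\top\mathbf{M}^{-1}$ with $\mathbf{M}=\mathbf{A}_{[T]}$, which reproduces Eq.~(\ref{eq10}) verbatim.

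For the factorization, since $\mathbf{U}_{T+1}$ is a horizontal concatenation of block-columns we have $\mathbf{U}_{T+1}\mathbf{U}_{T+1}^\top = \sum_{k=1}^{3}\mathbf{U}_{T+1,k}\mathbf{U}_{T+1,k}^\top$, so it suffices to compute three outer products. The zero padding in $\mathbf{U}_{T+1,1}$ and $\mathbf{U}_{T+1,2}$ makes their contributions the pure Gram terms $\tilde{\mathbf{X}}_{T+1}^\top\tilde{\mathbf{X}}_{T+1}$ in the $(1,1)$ block and $(\mathbf{X}_{T+1}^{\textrm{(s)}})^\top\mathbf{X}_{T+1}^{\textrm{(s)}}$ in the $(2,2)$ block, while $\mathbf{U}_{T+1,3}$, which carries the factor $\sqrt{\lambda}$ and the sign pattern $(+,-)$ across its two row blocks, contributes $\lambda$ times the rank-structured matrix with diagonal blocks $\tilde{\mathbf{X}}_{T+1}^\top\tilde{\mathbf{X}}_{T+1}$, $(\mathbf{X}_{T+1}^{\textrm{(s)}})^\top\mathbf{X}_{T+1}^{\textrm{(s)}}$ and off-diagonal blocks $-\tilde{\mathbf{X}}_{T+1}^\top\mathbf{X}_{T+1}^{\textrm{(s)}}$, $-(\mathbf{X}_{T+1}^{\textrm{(s)}})^\top\tilde{\mathbf{X}}_{T+1}$. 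Adding the three pieces reproduces exactly the increment appearing in Eq.~(\ref{eq7}), with its $(1+\lambda)$ diagonal factors and $-\lambda$ off-diagonal blocks.

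To justify the use of Woodbury I would note that $\mathbf{A}_{[T]}$ is symmetric positive definite: running the recursion from $\mathbf{A}_{[0]}=\rho\mathbf{I}$ with $\rho>0$ exhibits it as $\rho\mathbf{I}$ plus a sum of positive semidefinite terms, so $\mathbf{A}_{[T]}^{-1}$ exists and is positive definite, whence $\mathbf{I}+\mathbf{U}_{T+1}^\top\mathbf{A}_{[T]}^{-1}\mathbf{U}_{T+1}$ is positive definite and invertible. The only place that requires care is the block bookkeeping in the factorization step, i.e. checking that the signs and the extra $\lambda$ on the diagonal blocks line up correctly; once that is settled the remainder is a direct substitution into the Woodbury formula. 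It is worth remarking that the matrix inverted in Eq.~(\ref{eq10}), namely $\mathbf{I}+\mathbf{U}_{T+1}^\top\mathbf{A}_{[T]}^{-1}\mathbf{U}_{T+1}$, has size $3n_{T+1}\times 3n_{T+1}$, so when $n_{T+1}$ is small this update is far cheaper than re-inverting the $(d^{\textrm{(v)}}+2d^{\textrm{(s)}})$-dimensional matrix $\mathbf{A}_{[T+1]}$ from scratch, which is exactly the point of the proposition.
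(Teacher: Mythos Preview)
Your proposal is correct and follows essentially the same approach as the paper: decompose the increment in Eq.~(\ref{eq7}) as $\mathbf{U}_{T+1}\mathbf{U}_{T+1}^\top$ via the same three-term block splitting, then apply the Woodbury identity. Your additional remark verifying the positive definiteness of $\mathbf{A}_{[T]}$ and hence the invertibility of $\mathbf{I}+\mathbf{U}_{T+1}^\top\mathbf{A}_{[T]}^{-1}\mathbf{U}_{T+1}$ is a welcome point of rigor that the paper leaves implicit.
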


\begin{proof}
Note that, the updating rule of $\mathbf{A}_{[T+1]}$ is shown in Eq. (\ref{eq7}). We now decompose the adding part as
{
\begin{equation}
\label{eq7-ss}
\begin{split}
& \left[\begin{array}{*{20}{c}}
(1+\lambda) \tilde{\mathbf{X}}_{T+1}^\top \tilde{\mathbf{X}}_{T+1}  &  -\lambda \tilde{\mathbf{X}}_{T+1}^\top \mathbf{X}_{T+1}^{\textrm{(s)}} \\
-\lambda (\mathbf{X}_{T+1}^{\textrm{(s)}})^\top \tilde{\mathbf{X}}_{T+1}  & (1+\lambda)(\mathbf{X}_{T+1}^{\textrm{(s)}})^\top \mathbf{X}_{T+1}^{\textrm{(s)}}
\end{array}
\right] =
\left[\begin{array}{*{20}{c}}
\tilde{\mathbf{X}}_{T+1}^\top \tilde{\mathbf{X}}_{T+1} &  \mathbf{0} \\
\mathbf{0}  & \mathbf{0}
\end{array}
\right] \\
+& \left[\begin{array}{*{20}{c}}
\mathbf{0}  & \mathbf{0} \\
\mathbf{0}  & (\mathbf{X}_{T+1}^{\textrm{(s)}})^\top \mathbf{X}_{T+1}^{\textrm{(s)}}
\end{array}
\right]+
\left[\begin{array}{*{20}{c}}
 \lambda \tilde{\mathbf{X}}_{T+1}^\top \tilde{\mathbf{X}}_{T+1} &  -\lambda \tilde{\mathbf{X}}_{T+1}^\top \mathbf{X}_{T+1}^{\textrm{(s)}} \\
-\lambda (\mathbf{X}_{T+1}^{\textrm{(s)}})^\top \tilde{\mathbf{X}}_{T+1}   & \lambda (\mathbf{X}_{T+1}^{\textrm{(s)}})^\top \mathbf{X}_{T+1}^{\textrm{(s)}}
\end{array}
\right]
\end{split}
\end{equation}
}

Denote $\mathbf{U}_{T+1,1}$, $\mathbf{U}_{T+1,2}$ and $\mathbf{U}_{T+1,3}$ as shown in Proposition 2, we have
{
\begin{equation}
\label{eq9-s}
\begin{split}
& \left[\begin{array}{*{20}{c}}
(1+\lambda) \tilde{\mathbf{X}}_{T+1}^\top \tilde{\mathbf{X}}_{T+1}  &  -\lambda \tilde{\mathbf{X}}_{T+1}^\top \mathbf{X}_{T+1}^{\textrm{(s)}} \\
-\lambda (\mathbf{X}_{T+1}^{\textrm{(s)}})^\top \tilde{\mathbf{X}}_{T+1}  & (1+\lambda)(\mathbf{X}_{T+1}^{\textrm{(s)}})^\top \mathbf{X}_{T+1}^{\textrm{(s)}}
\end{array}
\right] \\
= &\mathbf{U}_{T+1,1} \mathbf{U}_{T+1,1}^\top + \mathbf{U}_{T+1,2} \mathbf{U}_{T+1,2}^\top+\mathbf{U}_{T+1,3} \mathbf{U}_{T+1,3}^\top \\
= & \mathbf{U}_{T+1} \mathbf{U}_{T+1}^\top,
\end{split}
\end{equation}
}
with $\mathbf{U}_{T+1} = [\mathbf{U}_{T+1,1}, \mathbf{U}_{T+1,2}, \mathbf{U}_{T+1,3}]$.

Using the Woodbury equation \cite{Higham:2002}, we have the results as follows.
{
\begin{equation}
\label{eq10-s}
\begin{split}
\mathbf{A}_{[T+1]}^{-1} = (\mathbf{A}_{[T]}+ \mathbf{U}_{T+1} \mathbf{U}_{T+1}^\top )^{-1} =
\mathbf{A}_{[T]}^{-1}-\mathbf{A}_{[T]}^{-1}\mathbf{U}_{T+1}\left(\mathbf{I}+\mathbf{U}_{T+1}^\top \mathbf{A}_{[T]}^{-1}\mathbf{U}_{T+1}\right)^{-1} \mathbf{U}_{T+1}^\top \mathbf{A}_{[T]}^{-1}
\end{split}
\end{equation}}

It is the results shown in Proposition 2.
\end{proof}

Similarly, it is also the one-pass way in updating $\mathbf{A}_{[T+1]}^{-1}$ and we need to access the whole data only once. In each iteration shown in Eq. (\ref{eq10}), the most computational step is calculating the inverse of a matrix with size $3n_{T+1}\times 3n_{T+1}$. If the batch size $n_{T+1}$ is small, its computational cost is limited and we can compute $\mathbf{A}_{[T_1]}^{-1}$ in a quick way. Especially, if $n_i=1$ for $i\in [T_1]$ as in traditional online learning, we only need to compute the inverse of a $3\times 3$ matrix.

Besides, a byproduct of this kind of iteration is that we can get the optimal solution at any time $T$, since we have derived $\mathbf{A}_{[T]}^{-1}$ directly. If we use the iteration method shown in Eq. (\ref{eq7}), we need to calculate $\mathbf{A}_{[T]}^{-1}$ at each time $T$. When this requirement is frequent, the computational cost will increase since we need to compute the matrix inverse for each requirement.

\textbf{E-stage:} The optimization problem in Eq. (\ref{eq11}) with concrete forms defined in Eq. (\ref{eq19}) has been widely investigated in previous works and the details are omitted. We use the implementation of LibLinear \cite{REF08a} to solve them and the parameters $w_1$ and $w_2$ are turned by cross validation.

\subsection{Extension}

Note that in Eq. (\ref{eq11}), the interaction between two classifiers is a balance of classification results by turning the weights. We think the more direct way is combining all the features as in stacking \cite{stacking} and training a unified classifier on the stacked representations. This evokes the following formulation.
{
\begin{equation}
\label{eq12}
\begin{split}
\min_{\bar{h}^{\textrm{(s)}}, \bar{h}, w_1, w_2} \ell \left( \sqrt{w_1} \bar{h}^{\textrm{(s)}}(\mathbf{Z}_{T_1+1}^{\textrm{(s)}})+
\sqrt{w_2} \bar{h}(\bar{\mathbf{Z}}_{T_1+1}), \mathbf{Y}_{T_1+1} \right)~,
\end{split}
\end{equation}}
where $\ell$ is a general loss on the joint representations. Here, we use the square root of balance parameters to guarantee their convexity and avoid the trivial solution. They can be learned automatically without extra hyper-parameters.

Taking regression with kernels as an example, we have the following concrete formulation.
{
\begin{equation}
\label{eq13}
\begin{split}
\min_{\mathbf{V}^{\textrm{(s)}},\bar{\mathbf{V}},w_1,w_2 }
& \left \| \sqrt{w_1} \langle {\mathbf{V}}^{\textrm{(s)}}, \Phi(\mathbf{Z}_{T_1+1}^{\textrm{(s)}}) \rangle +
\sqrt{w_2} \langle \bar{\mathbf{V}}, \Phi( \bar{\mathbf{Z}}_{T_1+1}) \rangle - \mathbf{Y}_{T_1+1}  \right\|^2\\
+& \gamma (\frac{1}{c}\| \mathbf{V}^{\textrm{(s)}}\|^2+ \frac{1}{c+d^{\textrm{(a)}}}\| \bar{\mathbf{V}} \|^2),~\textrm{with}~w_1+w_2=1,~w_1\geq 0,~w_2\geq 0~,
\end{split}
\end{equation}}
where $\Phi(\cdot)$ is a mapping function and $\gamma$ is the parameter for regularization. The feature numbers of $\mathbf{Z}_{T_1+1}^{\textrm{(s)}}$ and $\bar{\mathbf{Z}}_{T_1+1}$ are $c$, $c+d^{\textrm{(a)}}$, and $c$ is often much smaller than $c+d^{\textrm{(a)}}$. To alleviate the influence caused by the unbalance, each regularizer is divided by the corresponding feature number.

It is not easy to solve the problem in Eq. (\ref{eq13}) directly. After some deductions, it is equal to
{
\begin{equation}
\label{eq14}
\begin{split}
\min_{\mathbf{V}^{\textrm{(s)}},\bar{\mathbf{V}}, w_1, w_2 }
& \left\| \langle {\mathbf{V}}^{\textrm{(s)}}, \Phi(\mathbf{Z}_{T_1+1}^{\textrm{(s)}}) \rangle +
 \langle \bar{\mathbf{V}}, \Phi( \bar{\mathbf{Z}}_{T_1+1}) \rangle - \mathbf{Y}_{T_1+1}  \right\|^2\\
+\gamma (\frac{1}{c \times w_1} &\| \mathbf{V}^{\textrm{(s)}}\|^2+ \frac{1}{(d^{\textrm{(a)}}+c)\times w_2}\| \bar{\mathbf{V}} \|^2),~\textrm{with}~w_1+w_2=1,~w_1\geq 0,~w_2\geq 0.
\end{split}
\end{equation}}

The optimization problem in Eq. (\ref{eq14}) seems very similar to traditional regularized square loss regression. Nevertheless, they are different since the regularization parameter is fixed in traditional method, while it is also an optimization parameter in our setting.

There are two groups of optimization variables, i.e., the coefficients for classification and the balance parameters. It is difficult to optimize them together and we optimize them alternatively.

When $\mathbf{V}^{\textrm{(s)}}$ and $\bar{\mathbf{V}}$ are fixed, the optimal balance parameters $w_1$ and $w_2$ can be computed by the following proposition directly.

\begin{lemma}
\label{lemma3}
When $w_1+w_2=1,~w_1\geq 0,~w_2\geq 0$,
{
\begin{equation}
\label{eq15}
\begin{split}
\min_{w_1,w_2}~(\frac{1}{ w_1 \times c} \| \mathbf{V}^{\textrm{(s)}}\|^2+ \frac{1}{w_2 \times (d^{\textrm{(a)}}+c)} \|\bar{\mathbf{V}}\|^2) =
(\frac{1}{\sqrt{c}} \| \mathbf{V}^{\textrm{(s)}}\|+ \frac{1}{\sqrt{d^{\textrm{(a)}}+c}} \| \bar{\mathbf{V}}\|)^2.
\end{split}
\end{equation}}
The optimal solution is
{
\begin{equation}
\label{eq16}
\begin{split}
w_1^*=\frac{\|\mathbf{V}^{\textrm{(s)}}\|/ \sqrt{c}} {\|\mathbf{V}^{\textrm{(s)}}\|/ \sqrt{c}+ \|\bar{\mathbf{V}}\| / \sqrt{d^{\textrm{(a)}}+c} },~
w_2^*=\frac{\|\bar{\mathbf{V}} \|/ \sqrt{d^{\textrm{(a)}}+c}} {\|\mathbf{V}^{\textrm{(s)}}\|/ \sqrt{c}+\|\bar{\mathbf{V}}\| / \sqrt{d^{\textrm{(a)}}+c} }.
\end{split}
\end{equation}}
\end{lemma}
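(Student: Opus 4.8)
The plan is to reduce Eq.~(\ref{eq15}) to a one-line Cauchy--Schwarz inequality. I would introduce the shorthand $a \triangleq \|\mathbf{V}^{\textrm{(s)}}\|/\sqrt{c}$ and $b \triangleq \|\bar{\mathbf{V}}\|/\sqrt{d^{\textrm{(a)}}+c}$, so that the quantity to be minimized on the left-hand side of Eq.~(\ref{eq15}) is $g(w_1,w_2) = a^2/w_1 + b^2/w_2$ over the simplex $w_1+w_2=1$, $w_1,w_2\ge 0$. First I would dispose of the boundary: since $a,b\ge 0$, if $w_1=0$ or $w_2=0$ the objective is $+\infty$ (when the corresponding norm is nonzero; the degenerate case is treated below), so any minimizer lies in the open interval and it suffices to work with $w_1\in(0,1)$, $w_2=1-w_1$.

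The key step is the inequality $\left(a^2/w_1 + b^2/w_2\right)(w_1+w_2) \ge (a+b)^2$, which is exactly Cauchy--Schwarz applied to the vectors $\bigl(a/\sqrt{w_1},\, b/\sqrt{w_2}\bigr)$ and $\bigl(\sqrt{w_1},\,\sqrt{w_2}\bigr)$ (equivalently, a consequence of the convexity of $t\mapsto 1/t$ on $(0,\infty)$). Invoking the constraint $w_1+w_2=1$ gives $g(w_1,w_2)\ge (a+b)^2$, and substituting back the definitions of $a$ and $b$ yields precisely the right-hand side of Eq.~(\ref{eq15}).

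It then remains to exhibit a feasible point attaining the bound, which also pins down the minimizer. Equality in Cauchy--Schwarz holds iff $\bigl(a/\sqrt{w_1},\,b/\sqrt{w_2}\bigr)$ is proportional to $\bigl(\sqrt{w_1},\,\sqrt{w_2}\bigr)$, i.e. $a/w_1 = b/w_2$; combined with $w_1+w_2=1$ this forces $w_1 = a/(a+b)$ and $w_2 = b/(a+b)$, which is exactly the claimed optimum $w_1^*,w_2^*$ of Eq.~(\ref{eq16}) once $a$ and $b$ are written out in terms of the norms. Plugging this pair into $g$ recovers $(a+b)^2$, confirming it is the minimum. Alternatively, the same conclusion follows by eliminating $w_2=1-w_1$, differentiating $a^2/w_1 + b^2/(1-w_1)$, and solving the resulting equation: the critical point $w_1 = a/(a+b)$ is the unique one in $(0,1)$ and is a minimum since the function is strictly convex there.

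I do not expect a genuine obstacle in this argument; the only point that warrants a sentence of care is the degenerate case $\mathbf{V}^{\textrm{(s)}}=\mathbf{0}$ or $\bar{\mathbf{V}}=\mathbf{0}$, in which one term drops out, the infimum is approached as $w_i\to 1$, and both sides of Eq.~(\ref{eq15}) collapse to the single surviving term, so the identity still holds (with the optimal weights interpreted as the corresponding limit).
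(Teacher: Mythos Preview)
Your argument is correct. The paper's own proof, however, takes the direct calculus route that you mention only as an alternative: it substitutes $w_2=1-w_1$, differentiates $\frac{1}{w_1 c}\|\mathbf{V}^{\textrm{(s)}}\|^2+\frac{1}{(1-w_1)(d^{\textrm{(a)}}+c)}\|\bar{\mathbf{V}}\|^2$ with respect to $w_1$, sets the derivative to zero, solves for $w_1$, and then observes that the resulting $w_1^*,w_2^*$ satisfy the nonnegativity constraints automatically. The paper does not explicitly verify the minimum value in Eq.~(\ref{eq15}), nor does it check second-order conditions or discuss the degenerate case.

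Your Cauchy--Schwarz approach is a genuinely different and somewhat cleaner path: it delivers the optimal value $(a+b)^2$ and the minimizer $w_1^*=a/(a+b)$, $w_2^*=b/(a+b)$ in one stroke via the equality condition, with no need to appeal to convexity or to verify that the stationary point is a minimum. The calculus approach the paper uses is more pedestrian but entirely adequate; it just leaves the verification of Eq.~(\ref{eq15}) and the global nature of the minimum implicit. Your treatment of the boundary and the degenerate case $\mathbf{V}^{\textrm{(s)}}=\mathbf{0}$ or $\bar{\mathbf{V}}=\mathbf{0}$ is also more careful than the paper's.
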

\begin{proof}
The optimization problem is
\begin{equation}
\label{eq11-s}
\begin{split}
\min_{w_1,w_2}~&(\frac{1}{ w_1 \times c} \| \mathbf{V}^{\textrm{(s)}}\|^2+ \frac{1}{w_2\times (d^{\textrm{(a)}}+c)} \|\bar{\mathbf{V}} \|^2)~, \\
\textrm{s.t.}~&w_1+w_2=1,~w_1\geq 0,~w_2\geq 0.
\end{split}
\end{equation}

Replace $w_2$ with $w_2=1-w_1$ in Eq. (\ref{eq11-s}), take derivative of this objective function with respect to $w_1$ and set it to zero, we have
{
\begin{equation}
\label{eq12-s}
\begin{split}
-\frac{1}{ w_1^2 \times c} \| \mathbf{V}^{\textrm{(s)}}\|^2 + \frac{1}{(1-w_1)^2 \times (d^{\textrm{(a)}}+c)} \| \bar{\mathbf{V}}\|^2)= 0. \\
\end{split}
\end{equation}
}

By solving this problem and using $w_2=1-w_1$, we have
{
\begin{equation}
\label{eq13-s}
\begin{split}
w_1=\frac{\|\mathbf{V}^{\textrm{(s)}}\|/ \sqrt{c}} {\|\mathbf{V}^{\textrm{(s)}}\|/ \sqrt{c}+ \|\bar{\mathbf{V}}\| / \sqrt{d^{\textrm{(a)}}+c} },~
w_2=\frac{\|\bar{\mathbf{V}} \|/ \sqrt{d^{\textrm{(a)}}+c}} {\|\mathbf{V}^{\textrm{(s)}}\|/ \sqrt{c}+\|\bar{\mathbf{V}}\| / \sqrt{d^{\textrm{(a)}}+c} }.
\end{split}
\end{equation}
}

Note that the solution in Eq. (\ref{eq13-s}) satisfies the constraint $w_1\geq 0,~w_2\geq 0$ automatically. Thus, it is the optimal solution and the results in Proposition 3 hold.
\end{proof}

When $w_1$ and $w_2$ are fixed, the optimal solution to the problem in Eq. (\ref{eq14}) can be obtained in a close form as shown in the following proposition.

\begin{lemma}
\label{lemma4}
When $w_1$ and $w_2$ are fixed, the optimal solution to the problem in Eq. (\ref{eq14}) can be derived by solving
{
\begin{equation}
\label{eq17}
\begin{split}
\left[\begin{array}{*{20}{c}}
(\Phi(\mathbf{Z}_{T_1+1}^{\textrm{(s)}}))^\top \Phi(\mathbf{Z}_{T_1+1}^{\textrm{(s)}})+  \frac{\gamma} {c \times w_1}\mathbf{I}
                      & (\Phi(\mathbf{Z}_{T_1+1}^{\textrm{(s)}}))^\top \Phi( \bar{\mathbf{Z}}_{T_1+1}) \\
                        (\Phi(\bar{\mathbf{Z}}_{T_1+1}))^\top \Phi( \mathbf{Z}_{T_1+1}^{\textrm{(s)}})
&(\Phi( \bar{\mathbf{Z}}_{T_1+1} ))^\top \Phi(\bar{\mathbf{Z}}_{T_1+1})+  \frac{\gamma} {(d^{\textrm{(a)}}+c) \times w_2}\mathbf{I}
\end{array}
\right] \mathbf{V} = \mathbf{D},
\end{split}
\end{equation}}
{
\begin{equation}
\label{eq18}
\begin{split}
\mathbf{V}=
\left[ {\begin{array}{*{20}{c}}
\mathbf{V}^{\textrm{(s)}} \\
\bar{\mathbf{V}} \\
\end{array}} \right],~~
\mathbf{D} = \left[ {\begin{array}{*{20}{c}}
(\Phi(\mathbf{Z}_{T_1+1}^{\textrm{(s)}}))^\top \\
(\Phi(\bar{\mathbf{Z}}_{T_1+1}))^\top \\
\end{array}} \right] \mathbf{Y}_{T_1+1}.
\end{split}
\end{equation}}
\end{lemma}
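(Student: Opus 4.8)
The plan is to fix $w_1$ and $w_2$ (both strictly positive, as is guaranteed at the alternating optimum by Proposition 3 unless one coefficient block vanishes), so that Eq.~(\ref{eq14}) becomes an unconstrained minimization in the pair $(\mathbf{V}^{\textrm{(s)}}, \bar{\mathbf{V}})$ alone. Abbreviating $\mathbf{P} \triangleq \Phi(\mathbf{Z}_{T_1+1}^{\textrm{(s)}})$ and $\mathbf{Q} \triangleq \Phi(\bar{\mathbf{Z}}_{T_1+1})$, and recalling that $\langle \mathbf{V}, \Phi(\cdot)\rangle$ denotes the ordinary matrix product $\Phi(\cdot)\mathbf{V}$, the objective reads $\| \mathbf{P}\mathbf{V}^{\textrm{(s)}} + \mathbf{Q}\bar{\mathbf{V}} - \mathbf{Y}_{T_1+1}\|^2 + \frac{\gamma}{c w_1}\|\mathbf{V}^{\textrm{(s)}}\|^2 + \frac{\gamma}{(d^{\textrm{(a)}}+c) w_2}\|\bar{\mathbf{V}}\|^2$. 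First I would observe that this is the sum of a convex quadratic in the stacked variable and two positive-definite quadratic regularizers (finite precisely because $w_1,w_2>0$), hence strictly convex, so its unique minimizer is characterized by the simultaneous vanishing of both partial gradients.

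Next I would write down the two stationarity conditions. Differentiating with respect to $\mathbf{V}^{\textrm{(s)}}$ yields $\mathbf{P}^\top(\mathbf{P}\mathbf{V}^{\textrm{(s)}} + \mathbf{Q}\bar{\mathbf{V}} - \mathbf{Y}_{T_1+1}) + \frac{\gamma}{c w_1}\mathbf{V}^{\textrm{(s)}} = \mathbf{0}$, and differentiating with respect to $\bar{\mathbf{V}}$ yields $\mathbf{Q}^\top(\mathbf{P}\mathbf{V}^{\textrm{(s)}} + \mathbf{Q}\bar{\mathbf{V}} - \mathbf{Y}_{T_1+1}) + \frac{\gamma}{(d^{\textrm{(a)}}+c) w_2}\bar{\mathbf{V}} = \mathbf{0}$. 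Moving the $\mathbf{Y}_{T_1+1}$ terms to the right-hand side and stacking the two equations into one block-matrix system in the stacked variable $\mathbf{V}$ of Eq.~(\ref{eq18}) gives exactly Eqs.~(\ref{eq17})--(\ref{eq18}): the diagonal blocks are $\mathbf{P}^\top\mathbf{P} + \frac{\gamma}{c w_1}\mathbf{I}$ and $\mathbf{Q}^\top\mathbf{Q} + \frac{\gamma}{(d^{\textrm{(a)}}+c) w_2}\mathbf{I}$, the off-diagonal blocks are $\mathbf{P}^\top\mathbf{Q}$ and $\mathbf{Q}^\top\mathbf{P}$, and the right-hand side is $\mathbf{D} = [(\Phi(\mathbf{Z}_{T_1+1}^{\textrm{(s)}}))^\top\mathbf{Y}_{T_1+1};\,(\Phi(\bar{\mathbf{Z}}_{T_1+1}))^\top\mathbf{Y}_{T_1+1}]$.

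Finally I would note that the coefficient matrix in Eq.~(\ref{eq17}) equals $\mathbf{M}^\top\mathbf{M}$ plus a strictly positive-definite block-diagonal matrix, where $\mathbf{M} = [\mathbf{P},\mathbf{Q}]$, hence it is positive definite and invertible, so the linear system has a unique solution; by the strict convexity established in the first step, this solution is the global optimum of Eq.~(\ref{eq14}) for the given $w_1,w_2$. I do not expect a genuine obstacle here; the only points requiring care are the bookkeeping when differentiating through the feature map $\Phi$ in the bilinear notation $\langle\cdot,\cdot\rangle$, and making explicit that $w_1,w_2>0$ so that the regularizers stay finite and the objective stays strictly convex, with the degenerate boundary cases $w_1=0$ or $w_2=0$ handled separately or excluded by Proposition 3.
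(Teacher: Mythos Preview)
Your proposal is correct and follows essentially the same approach as the paper's own proof: differentiate the objective in Eq.~(\ref{eq14}) with respect to $\mathbf{V}^{\textrm{(s)}}$ and $\bar{\mathbf{V}}$, set the gradients to zero, and stack the resulting normal equations into the block system of Eqs.~(\ref{eq17})--(\ref{eq18}). Your treatment is in fact more careful than the paper's, which omits the strict-convexity and positive-definiteness arguments you include to justify that the stationary point is the unique global optimum.
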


\begin{proof}
When $w_1$ and $w_2$ are fixed, the optimization problem is shown in Eq. (\ref{eq14}). Take the derivative of it with respect respect to $\mathbf{V}^{\textrm{(s)}}$ and $\bar{\mathbf{V}}$, set them to zeros, we have
{
\begin{equation}
\label{eq17-s}
\begin{split}
&\left( (\Phi(\mathbf{Z}_{T_1+1}^{\textrm{(s)}}))^\top \Phi(\mathbf{Z}_{T_1+1}^{\textrm{(s)}})+  \frac{\gamma} {c \times w_1}\mathbf{I}\right) \mathbf{V}^{\textrm{(s)}}+
 (\Phi(\mathbf{Z}_{T_1+1}^{\textrm{(s)}}))^\top \Phi(\bar{\mathbf{Z}}_{T_1+1}) \bar{\mathbf{V}} = (\Phi(\mathbf{Z}_{T_1+1}^{\textrm{(s)}}))^\top \mathbf{Y}_{T_1+1}, \\
&\left( (\Phi(\bar{\mathbf{Z}}_{T_1+1}))^\top \Phi(\bar{\mathbf{Z}}_{T_1+1})+  \frac{\gamma} {(d^{\textrm{(a)}}+c) \times w_2}\mathbf{I}\right) \bar{\mathbf{V}} +
 (\Phi( \bar{\mathbf{Z}}_{T_1+1}))^\top \Phi(\mathbf{Z}_{T_1+1}^{\textrm{(s)}}) \mathbf{V}^{\textrm{(s)}}
 = (\Phi(\bar{\mathbf{Z}}_{T_1+1}))^\top \mathbf{Y}_{T_1+1}.
\end{split}
\end{equation}
}
After making some notations shown in Proposition 4, we can get the results in Proposition 4.
\end{proof}

There is a mapping function $\Phi$ in this formulation. If we do not know its concrete form, the kernel trick \cite{Scholkopf:1999:AKM:299094} can be employed to make prediction for testing data.

When we obtain the final classifier, it can be used on the testing data $\bar{\mathbf{X}}_{T_1+2} = [\mathbf{X}_{T_1+2}^{\textrm{(s)}}, \mathbf{X}_{T_1+2}^{\textrm{(a)}}]$ by first computing the new representations of $\mathbf{X}_{T_1+2}^{\textrm{(s)}}$ as ($\tilde{h}_{*}^{\textrm{(s)}}(\mathbf{X}_{T_1+2}^{\textrm{(s)}})$) and then employing this classifier. If we take the updating rule in Eq. (\ref{eq7}) and train a classifier in E-stage by optimizing Eq. (\ref{eq13}), the procedure of OPID is listed in Algorithm \ref{alg1:opid}.

\begin{algorithm}[!t]
\caption{OPID}
\label{alg1:opid}
{
\begin{algorithmic}
\STATE \textbf{Input:} Training data and label $\{\tilde{\mathbf{X}}_i, \mathbf{Y}_{i}\}_{i=1}^{T_1}$, $\{\bar{\mathbf{X}}_{T_1+1}, \mathbf{Y}_{T_1+1}\}$, testing data $\bar{\mathbf{X}}_{T_1+2}$, parameters $\lambda$, $\rho$, $\gamma$.
\STATE \textbf{Output:} The optimal $w_1$, $w_2$, $\mathbf{V}^{\textrm{(s)}}$ and $\bar{\mathbf{V}}$.  \\
\STATE \textbf{Training:} \\
\STATE 1: Update $\mathbf{A}_{[T]}$ and $\mathbf{B}_{[T]}$ using Eq. (\ref{eq7}) in the one-pass way. \\
\STATE 2: Compute the optimal $\mathbf{W}_{*}^{\textrm{(s)}}$ by solving Eq. (\ref{eq4}).\\
\STATE 3: Compute the new representation of $\mathbf{X}_{T_1+1}^{\textrm{(s)}}$ by $\mathbf{Z}_{T_1+1}^{\textrm{(s)}} = \mathbf{X}_{T_1+1}^{\textrm{(s)}} \mathbf{W}_{*}^{\textrm{(s)}}$.\\
\STATE 4: Initialize $w_1$=$w_2$=1/2\\
\STATE  \textbf{Repeat} \\
\STATE 5: Update $\mathbf{V}^{\textrm{(s)}}$ and $\bar{\mathbf{V}}$ by solving the problem in Eq. (\ref{eq17}). \\
\STATE 6: Update $w_1$ and $w_2$ by Eq. (\ref{eq16}). \\
\STATE \textbf{Until converges}
\STATE \textbf{Testing}
\STATE 7: Compute the new representation of $\mathbf{X}_{T_1+2}^{\textrm{(s)}}$ by $\mathbf{X}_{T_1+2}^{\textrm{(s)}} \mathbf{W}_{*}^{\textrm{(s)}}$. \\
\STATE 8: Compute the predicted label matrix of $\mathbf{X}_{T_1+2}^{\textrm{(s)}}$ by using the optimal $w_1$, $w_2$, $\mathbf{V}^{\textrm{(s)}}$, $\bar{\mathbf{V}}$ and kernel trick.
\end{algorithmic}}
\end{algorithm}

\section{Experimental Results}

There are two implementations (Eq. (\ref{eq11}) and Eq. (\ref{eq13})) of our algorithm. We name the OPID method with ensemble (Eq. (\ref{eq11})) as OPIDe and the implementation in Eq. (\ref{eq13}) is still named as OPID. For simplicity, in Eq. (\ref{eq13}), we take $\Phi(\mathbf{x})=\mathbf{x}$ and thus the classifiers are linear. For fairness, the classifiers in Eq. (\ref{eq11}) are also linear. We implement it using the LibLinear toolbox \cite{REF08a} with L$_2$-regularized logistic regression and the parameters are tuned by five-fold cross validation. The multi-class problem is tackled by one-vs-rest strategy.

We will compare OPID with other related methods. To show whether the classifier trained in C-stage is helpful, we also train a linear classifier with L$_2$-regularized logistic regression on the whole training data in E-stage, i.e., $\{ \bar{\mathbf{X}}_{T_1+1} \triangleq [\mathbf{X}_{T_1+1}^{\textrm{(s)}}, \mathbf{X}_{T_1+1}^{\textrm{(a)}}],~\mathbf{Y}_{T_1+1}\}$. For simplicity, this method is notated by SVM. Besides, to show the effectiveness of expanding in E-stage, we also train the same kind of linear classifiers on data with survived feature $\{ \mathbf{X}_{T_1+1}^{\textrm{(s)}}, \mathbf{Y}_{T_1+1}\}$ and augmented feature $\{\mathbf{X}_{T_1+1}^{\textrm{(a)}}, \mathbf{Y}_{T_1+1}\}$ respectively. They are named as SVM$^{\textrm{(s)}}$ and SVM$^{\textrm{(a)}}$.

We have evaluated our approach on eight different kinds of data sets. They are three digit data sets: Mnist\footnote{http://yann.lecun.com/exdb/mnist/}, Gisette\footnote{http://clopinet.com/isabelle/Projects/NIPS2003/\#challenge} and
USPS\footnote{http://yann.lecun.com/exdb/mnist/},
three DNA data sets:
DNA\footnote{http://archive.ics.uci.edu/ml/machine-learning-databases/statlog/}, Splice\footnote{http://www.cs.toronto.edu/~delve/data/splice/desc.html} and
Protein\footnote{http://archive.ics.uci.edu/ml/machine-learning-databases/statlog/},
the Vehicle data: SensIT Vehicle\footnote{ http://www.ecs.umass.edu/~mduarte/Software.html}, and
the image data set: Satimage\footnote{http://www.cs.toronto.edu/~delve/data/splice/desc.html}.
The digit data sets are employed as toy examples and the rest data sets are all collected in a sequential way. For example, the SensIT Vehicle data are collected from a sensor networks. Due to the life variance of different kinds of sensors, it is a typical feature and instance evolution system.

For simplicity, we assume that (1) $n_1=n_2=,\cdots,n_{T_1}=n_{T_1+1}=n_{T_1+2}$. In C-stage, the number of training points in each category is equal, whereas in E-stage, we randomly split $n_{T_1+1}+n_{T_1+2}$ examples into two equal parts and assign them as training and testing samples respectively. (2) In C-stage, we fix the total number of examples and vary the points number in each batch. Thus, companied with this, the number of training and testing examples also changes in E-stage. (3) We select the first $(\sum_{i=1}^{T_1} n_i)/c$ samples in each category as the training data in C-stage. (4) We assign the first $d^{\textrm{(v)}}$ features as vanished features, the next $d^{\textrm{(s)}}$ features as survived features and the rest as augmented features. Without specification, in our experiments, the first quarter and the last quarter are vanished features and augmented features, except for DNA and Splice, since their original dimensionality is low. Experimental setting details are shown in Table \ref{table1}.

\begin{table}[ht]
  \caption{{ The details about experimental setting.}}
  \label{table1}
  \vskip 0.1in
  \centering
  {
  \begin{tabular}{cccccccc}
    \hline
    Data & c & $\sum_{i=1}^{T_1} n_i$ & $n_i$ & $d^{\textrm{(v)}}$ & $d^{\textrm{(s)}}$ & $d^{\textrm{(a)}}$ & $n_{T_1+1}=n_{T_1+2}$\\
    \hline
    Mnist0vs5    & 2 & 3200 & 40, 80, 160, 320  & 114 & 228 & 113 & 40, 80, 160, 320 \\
    Mnist0vs3vs5 & 3 & 4800 & 60, 120, 240, 480 & 123 & 245 & 121 & 60, 120, 240, 480 \\
    DNA          & 3 & 1200 & 60, 120, 240, 300 & 50  & 80  & 50  & 60, 120, 240, 300 \\
    Splice       & 2 & 2240 & 40, 80,  160, 320 & 10  & 40  & 10  & 40, 80,  160, 320 \\
    SensIT Vehicle & 3 & 48000& 60, 120, 240, 480 & 25 & 50 & 25 & 60, 120, 240, 480 \\
    Gisette        & 2 & 6000 & 40, 100, 200, 300 & 1239 & 2478 & 1238 & 40, 100, 200, 300\\
    USPS0vs5    & 2 & 960  & 20, 40, 60,  80 & 64 & 128 & 64 & 20, 40, 60,  80\\
    USPS0vs3vs5 & 3 & 1440 & 30, 60, 90, 120 & 64 & 128 & 64 & 30, 60, 90, 120 \\
    Protein     & 3 & 4500 & 60, 150, 300, 450 & 70 & 200 & 86 & 60, 150, 300, 450 \\
    Satimage    & 3 & 1080 & 30, 60,  90, 120  & 10 & 18  & 8  & 30, 60, 90, 120\\
    \hline
  \end{tabular}}
  \vskip -0.1in
\end{table}

There are totally two groups of experiments. In the first group, we would like to report the classification accuracy comparison on the testing data in E-stage. In the second group, we will focus on the performance variation caused by the number of survived features.

\subsection{Classification Accuracy Comparison}

To show the results both intuitively and qualitatively, we report results on the first six data in the form of table and the rest with figures. The results of different methods on different data sets are presented in Table \ref{table_accuracy} and Fig. \ref{acc_comparison-s} respectively.

\begin{table*}[!t]
\renewcommand{\arraystretch}{1.1}
\caption{ The testing accuracies (mean$\pm$std.) of the compared methods on 6 data sets with different number of training and testing examples. '$\bullet/${\tiny $\odot$}$/\circ$' denote respectively that OPID(or OPIDe) is significantly better/tied/worse than the compared method by the $t$-test\cite{t-test} with confidence level 0.05. '-' means that the result is unavailable. From the third column to the fifth column, two symbols are the comparisons to IPODe and IPOD respectively. In the sixth column, the symbols are the comparisons between OPIDe and OPID. The highest mean accuracy is also boldfaced.}
\vskip  0.1in
\label{table_accuracy}
\centering
{\scriptsize
\begin{tabular}{c|| c| c c c  |c c }
\hline
  Data set &$n_i$ & $\textrm{SVM}$ & $\textrm{SVM}^{\textrm{(s)}} $ & $\textrm{SVM}^{\textrm{(a)}}$ & OPIDe & OPID \\
\hline
  Mnist& 40&.9485(.0309)$\bullet\bullet$&    .9455(.0334)$\bullet\bullet$&    .7105(.0569)$\bullet\bullet$&    .9700(.0286)$\bullet$&    \textbf{.9840}(.0158) \\
  0vs5 & 80&.9630(.0196)$\bullet\bullet$&    .9633(.0196)$\bullet\bullet$&    .6403(.0434)$\bullet\bullet$&    .9868(.0088){\tiny $\odot$}&    \textbf{.9888}(.0099) \\
       &160&.9571(.0137)$\bullet\bullet$&    .9528(.0136)$\bullet\bullet$&    .6691(.0271)$\bullet\bullet$&    .9794(.0097){\tiny $\odot$}&    \textbf{.9875}(.0090) \\
       &320&.9604(.0104)$\bullet\bullet$&    .9585(.0102)$\bullet\bullet$&    .6508(.0220)$\bullet\bullet$&    \textbf{.9738}(.0058){\tiny $\odot$}&    .9721(.0066) \\
\hline
  Mnist  &60& .9043(.0424)$\bullet\bullet$&    .9093(.0399)$\bullet\bullet$&    .5137(.0534)$\bullet\bullet$&    .9280(.0339)$\bullet$&     \textbf{.9403}(.0265) \\
  0vs3vs5&120&.9233(.0218)$\bullet\bullet$&    .9240(.0243)$\bullet\bullet$&    .4852(.0381)$\bullet\bullet$&    .9458(.0178){\tiny $\odot$}&    \textbf{.9497}(.0130) \\
         &240&.9125(.0165)$\bullet\bullet$&    .9133(.0134)$\bullet\bullet$&    .4653(.0266)$\bullet\bullet$&    .9345(.0141){\tiny $\odot$}&    \textbf{.9348}(.0135) \\
         &480&.9265(.0110)$\bullet\bullet$&    .9232(.0101)$\bullet\bullet$&    .4843(.0182)$\bullet\bullet$&    .9330(.0086){\tiny $\odot$}&    \textbf{.9337}(.0079) \\
\hline
 DNA&60& .7693(.0495)$\bullet\bullet$&    .8017(.0511)$\bullet\bullet$&    .3063(.0499)$\bullet\bullet$&    .9183(.0246){\tiny $\odot$}&    \textbf{.9253}(.0248) \\
    &120&.8418(.0327)$\bullet\bullet$&    .8630(.0308)$\bullet\bullet$&    .3855(.0428)$\bullet\bullet$&    .9315(.0191){\tiny $\odot$}&    \textbf{.9322}(.0172) \\
    &240&.8732(.0182)$\bullet\bullet$&    .8890(.0203)$\bullet\bullet$&    .3778(.0256)$\bullet\bullet$&    \textbf{.9385}(.0111){\tiny $\odot$}&    .9343(.0132) \\
    &300&.8857(.0174)$\bullet\bullet$&    .8969(.0188)$\bullet\bullet$&    .3957(.0242)$\bullet\bullet$&    \textbf{.9405}(.0120)$\circ$&
    .9348(.0118) \\
\hline
  Splice&40&.6625(.0621)$\bullet\bullet$&    .6755(.0618)$\bullet\bullet$&    .4640(.0513)$\bullet\bullet$&    \textbf{.8150}(.0468){\tiny $\odot$}&     .8025(.0455) \\
        &80&.7512(.0403)$\bullet\bullet$&    .7597(.0446)$\bullet\bullet$&    .4680(.0486)$\bullet\bullet$&    \textbf{.8122}(.0373){\tiny $\odot$}&     .8050(.0353) \\
       &160&.8084(.0258)$\bullet\bullet$&    .8116(.0244)$\bullet\bullet$&    .4829(.0354)$\bullet\bullet$&    \textbf{.8400}(.0203){\tiny $\odot$}&     .8391(.0205) \\
       &320&.8408(.0174)$\bullet\bullet$&    .8387(.0171)$\bullet\bullet$&    .4961(.0232)$\bullet\bullet$&    .8555(.0132){\tiny $\odot$}&     \textbf{.8594}(.0138) \\
\hline
SensIT &  60&.6533(.0536)$\bullet\bullet$&    .6513(.0561)$\bullet\bullet$&    .5533(.0727)$\bullet\bullet$&    .7113(.0443){\tiny $\odot$}&    \textbf{.7147}(.0482) \\
Vehicle& 120&.6418(.0482)$\bullet\bullet$&    .6550(.0439)$\bullet\bullet$&    .5777(.0512)$\bullet\bullet$&    \textbf{.7253}(.0410){\tiny $\odot$}&    .7218(.0365) \\
       & 240&.6816(.0287)$\bullet\bullet$&    .6805(.0293)$\bullet\bullet$&    .5942(.0352)$\bullet\bullet$&    .7237(.0187){\tiny $\odot$}&    \textbf{.7242}(.0183) \\
       & 480&.7088(.0177)$\bullet\bullet$&    .7076(.0164)$\bullet\bullet$&    .6065(.0214)$\bullet\bullet$&    .7206(.0132){\tiny $\odot$}&    \textbf{.7223}(.0139) \\
\hline
Gisette
    &  40&.8795(.0564)$\bullet\bullet$&    .8710(.0566)$\bullet\bullet$&    .8550(.0627)$\bullet\bullet$&    .9600(.0331)$\bullet$&    \textbf{.9710}(.0185) \\
    & 100&.9172(.0276)$\bullet\bullet$&    .9028(.0286)$\bullet\bullet$&    .9116(.0264)$\bullet\bullet$&    .9714(.0128){\tiny $\odot$}&    \textbf{.9756}(.0126) \\
    & 200&.9133(.0155)$\bullet\bullet$&    .9039(.0181)$\bullet\bullet$&    .8998(.0201)$\bullet\bullet$&    \textbf{.9559}(.0095){\tiny $\odot$}&    .9539(.0106) \\
    & 300&.9413(.0106)$\bullet\bullet$&    .9308(.0109)$\bullet\bullet$&    .9231(.0126)$\bullet\bullet$&    \textbf{.9636}(.0069)$\circ$&    .9533(.0093) \\
\hline
\multicolumn{2}{c}{OPIDe: win/tie/loss}& 24/0/0 & 24/0/0 & 24/0/0  & - & 2/19/3 \\
\multicolumn{2}{c}{OPID:  win/tie/loss}& 24/0/0 & 24/0/0 & 24/0/0  & 3/19/2 & -  \\
\hline
\end{tabular}}
\end{table*}

\begin{figure*}[!ht]
\vskip 0in
\centering
\begin{minipage}[h]{7.5 cm}
\centering
\includegraphics[height= 2.5 in]{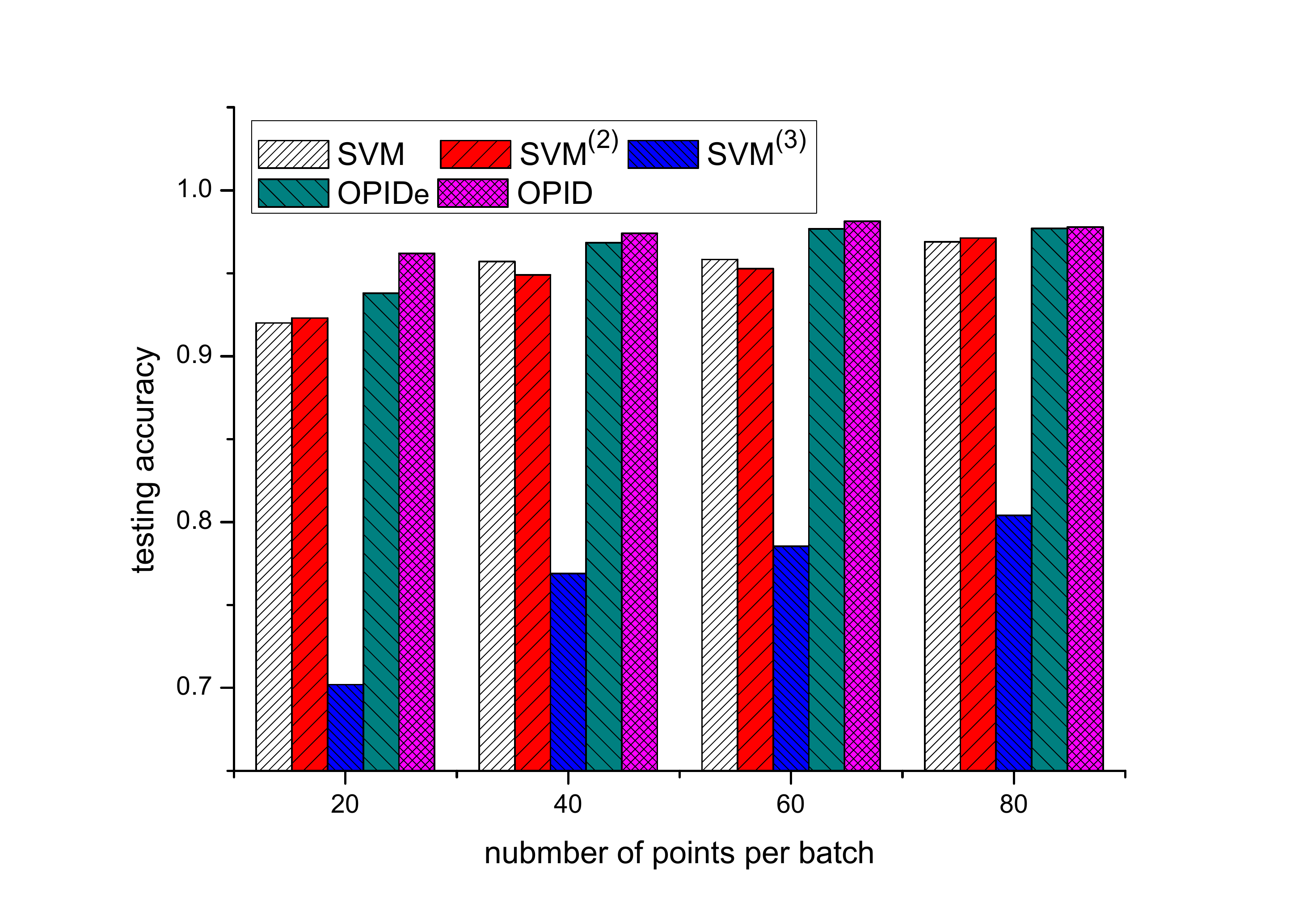}\\
\mbox{\footnotesize~~~~(a) USPS0vs5}
\end{minipage}
\begin{minipage}[h]{7.5 cm}
\centering
\includegraphics[height= 2.5 in]{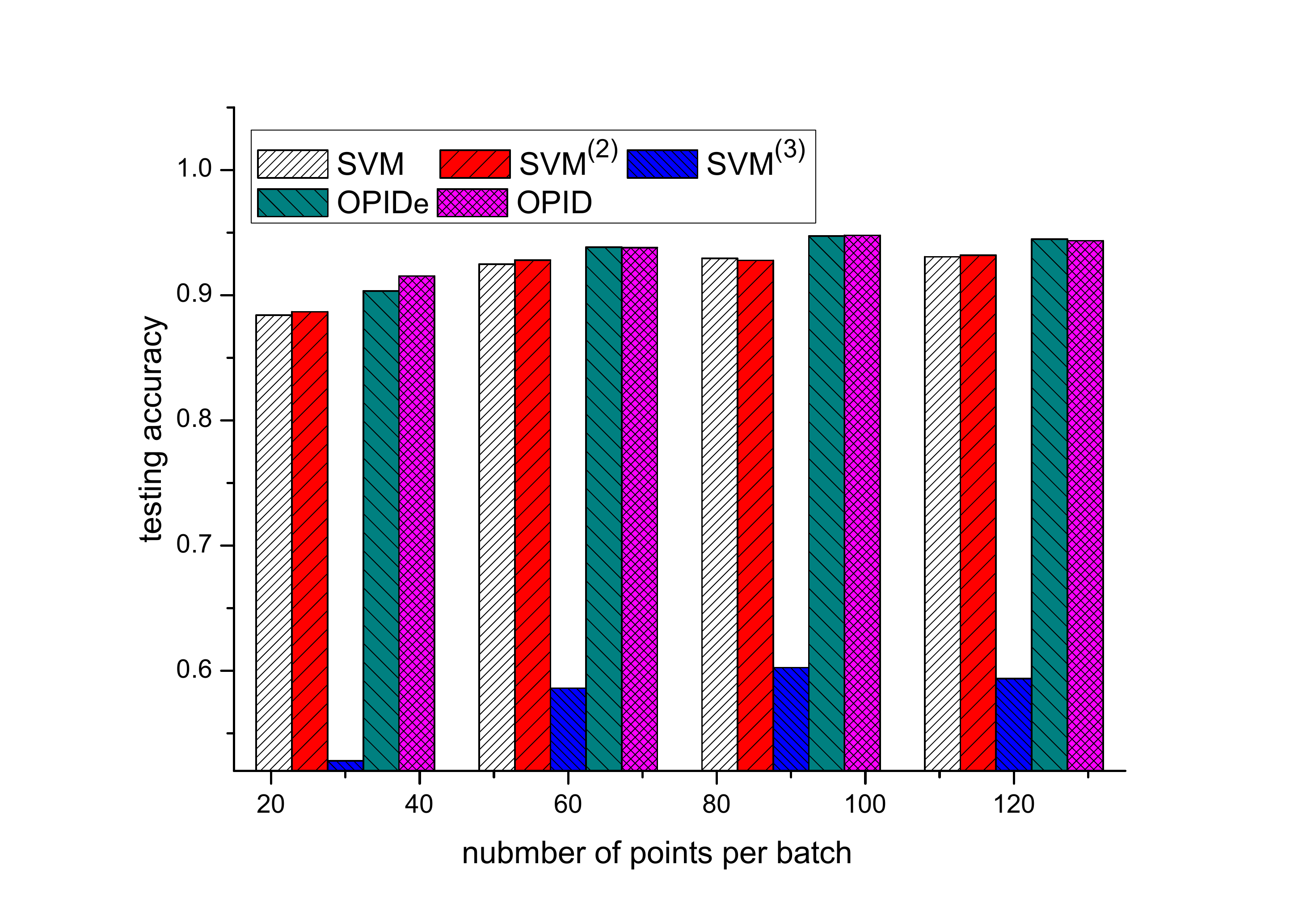}\\
\mbox{\footnotesize~~~~(b) USPS0vs3vs5}
\end{minipage}
\begin{minipage}[h]{7.5 cm}
\centering
\includegraphics[height= 2.5 in]{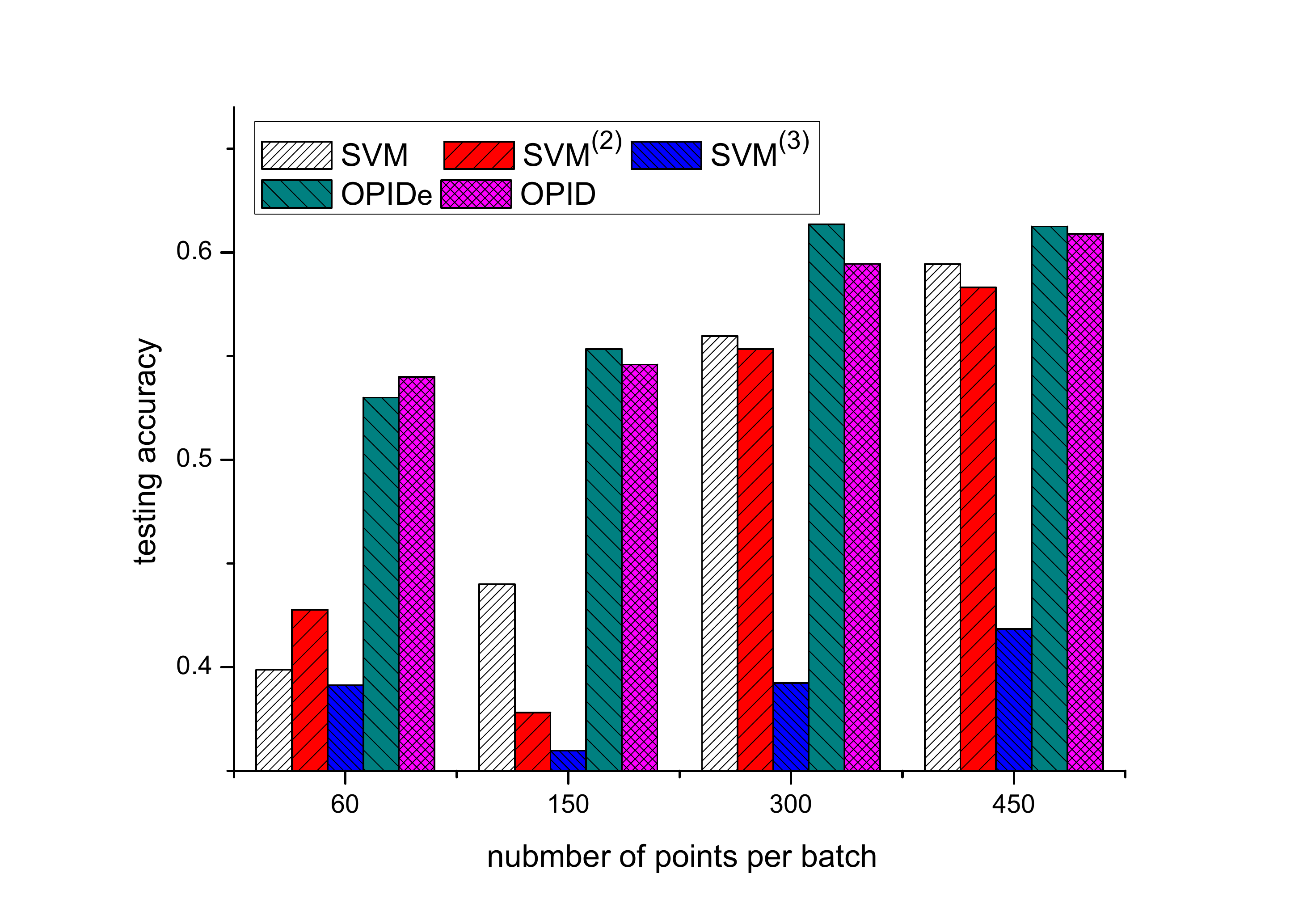}\\
\mbox{\footnotesize~~~~(c) Protein}
\end{minipage}
\begin{minipage}[h]{7.5 cm}
\centering
\includegraphics[height= 2.5 in]{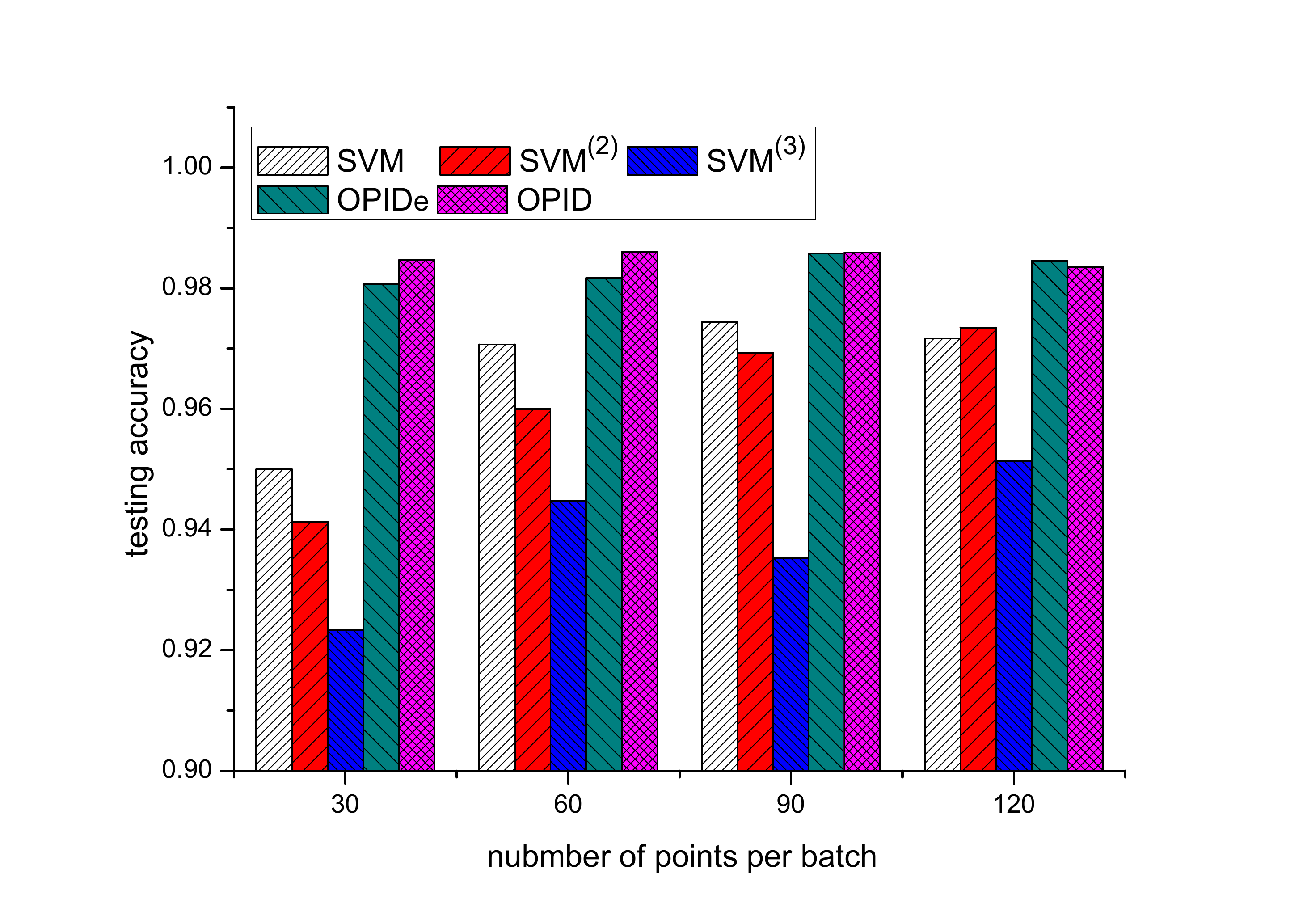}\\
\mbox{\footnotesize~~~~(d) Satimage}
\end{minipage}
\caption{Testing accuracy of different methods on different data sets with different numbers of training and testing instances.}
\label{acc_comparison-s}
\vskip -0.0in
\end{figure*}

 There are several observations from these results.

(1) When we use the classifier trained in C-stage to assist learning in E-stage, the testing performance will increase significantly, especially when the training points in E-stage are rare. This is consistent with intuition since the assistance from C-stage will be weaker with the increase of training points.

(2) Compared with the accuracy of SVM$^{\textrm{(a)}}$, our results have a remarkable improvement. This validates that our methods could, to some extent, inherit the metric from C-stage.

(3) It seems that the improvement of our method with respect to other approaches is much larger in multi-class scenario. The reason may be that the binary classification accuracy is high enough and it is hard to make further improvement.

(4) Compared OPID with OPIDe, it seems that OPID performs slightly better than OPIDe in most data sets. The $t$-test results show that their performances tie in most cases. Nevertheless, their performances are also data dependent.

(5) It seems that our method achieves more significant improvement on biological data sets (DNA, Splice, Protein) than image data sets (Mnist, Gisette and Satimage). It may be caused by the fact that the biological data is more time dependent than the image data and it is more consistent with our settings.

\begin{figure*}[!ht]
\vskip 0in
\centering
\begin{minipage}[h]{8 cm}
\centering
\includegraphics[height= 2.5 in]{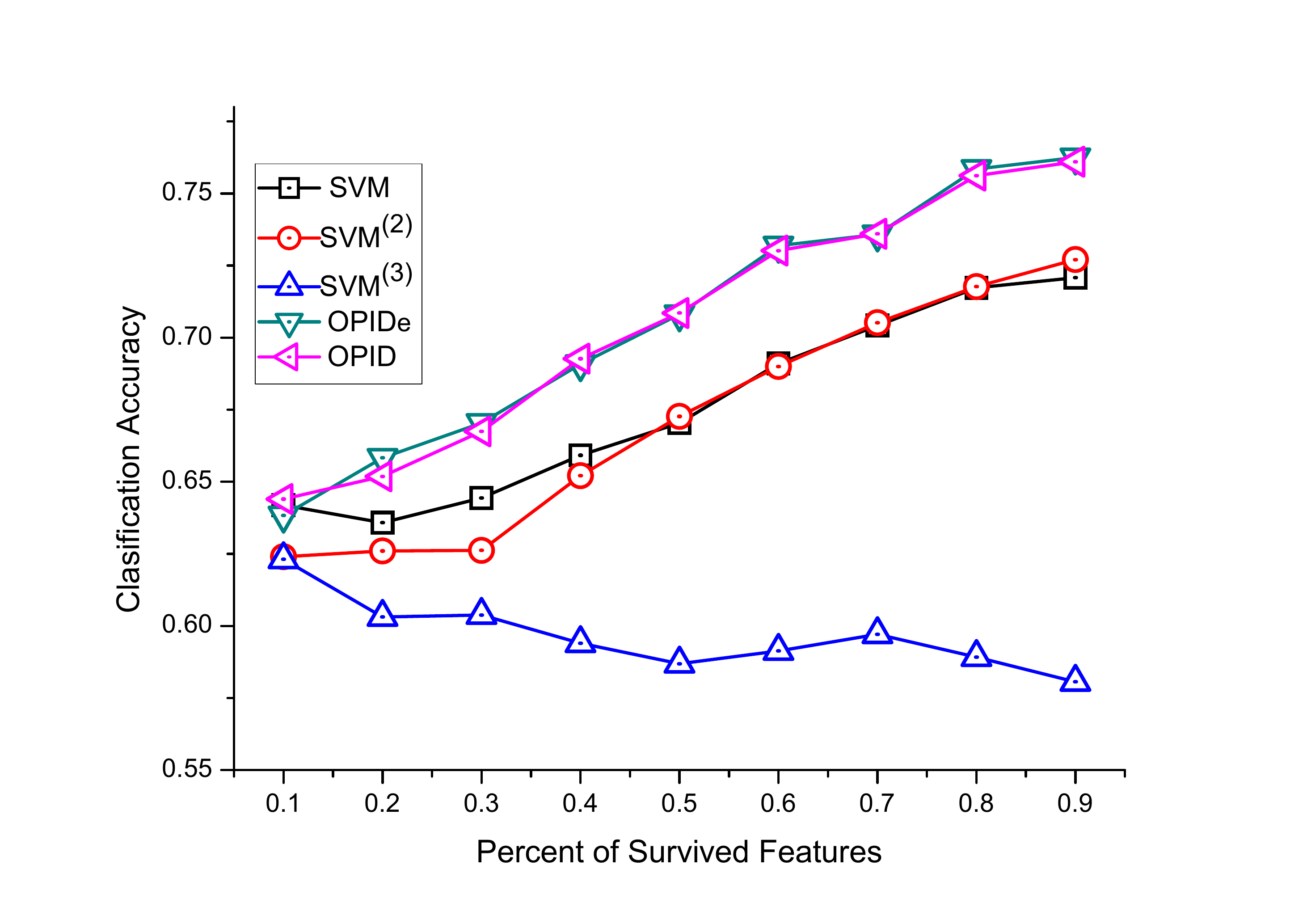}\\
\mbox{\footnotesize (a) SensIT Vehicle}
\end{minipage}
\begin{minipage}[h]{8 cm}
\centering
\includegraphics[height= 2.5 in]{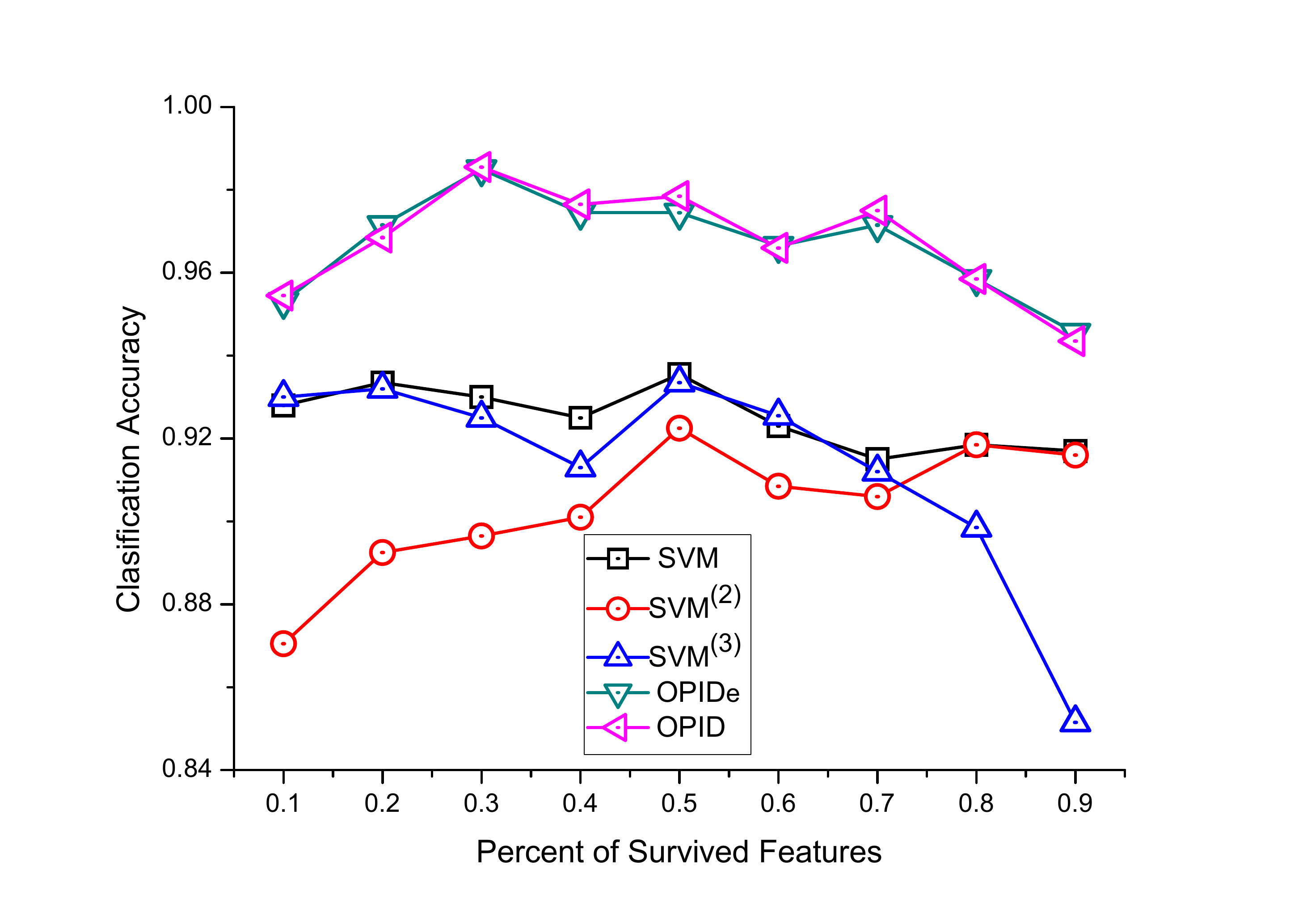}\\
\mbox{\footnotesize (b) Gisette}
\end{minipage}
\caption{Accuracy comparison of different methods with different percents of vanished features, survived features and augmented features.}
\label{dif_par-s}
\end{figure*}

\subsection{The Influence of the Number of Survived Features}

Different from traditional problems, there are three kinds of features in our settings. To illustrate the effectiveness of our methods, we vary the percentages of survived features and compare our methods with other related works. As in previous subsection, we also conduct experiments on SensIT Vehicle and Gisette. Similar to the way of assigning three kinds of features, we select different percentages of features in the middle as the survived feature. The rest are assigned as vanished feature and augmented feature with equal feature number. Comparison results are shown in Fig. (\ref{dif_par-s}).

There are at least two observations from Fig. (\ref{dif_par-s}).

(1) Our proposed methods outperform traditional methods, no matter what the percentage of survived features is. It validates the effectiveness of our method in dealing the problem in this setting.

(2) With the increase number of survived features, OPID and OPIDe achieve higher accuracies on SensIT Vehicle data. Nevertheless, their performances have a little turbulence on Gisette data. The reason may be that compared with SensIT Vehicle data whose total feature number is 100, the feature number of Gisette is 4955. It trends to be suffered from the curse of dimensionality \cite{Donoho00}.

\section{Conclusion}

In this paper, we study the problem of learning with incremental and decremental features, and propose the one-pass learning approach that does not need to keep the whole data for optimization. Our approach is particularly useful when both the data and features are evolving, while robust learning performance are needed, and is well scalable because it only needs to scan each instance once. In this paper we focus on one-shot feature change where the survived and augmented features do not vanish. It will be interesting to extend to multi-shot feature change where the survived and augmented features can vanish later.

\bibliography{opid}\bibliographystyle{alpha}
\end{document}